\DeclareMathOperator{\argmax}{argmax} 
\DeclareMathOperator{\argmin}{argmin}
\newtheorem{theorem}{Theorem}
\title{A Sparse Quantized Hopfield Network for \\Online-Continual Memory}
\author{%
  Nick Alonso\thanks{Author Contact: nalonso2@uci.edu. \\This work was supported by the National Science Foundation Grant IIS-1813785 and the Air Force Office of Scientific Research Grant FA9550-19-1-0306.} \\
  Cognitive Sciences Dept.\\
  University of California, Irvine\\
  \And
  Jeff Krichmar \\
  Cognitive Sciences Dept.\\
  Computer Science Dept.\\
  University of California, Irvine\\
}
\begin{document}
\maketitle
\begin{abstract}
An important difference between brains and deep neural networks is the way they learn. Nervous systems learn online where a stream of noisy data points are presented in a non-independent, identically distributed (non-i.i.d.) way. Further, synaptic plasticity in the brain depends only on information local to synapses. Deep networks, on the other hand, typically use non-local learning algorithms and are trained in an offline, non-noisy, i.i.d. setting. Understanding how neural networks learn under the same constraints as the brain is an open problem for neuroscience and neuromorphic computing. A standard approach to this problem has yet to be established. In this paper, we propose that discrete graphical models that learn via an online maximum a posteriori learning algorithm could provide such an approach. We implement this kind of model in a novel neural network called the Sparse Quantized Hopfield Network (SQHN). We show that SQHNs outperform state-of-the-art neural networks on associative memory tasks, outperform these models in online, non-i.i.d. settings, learn efficiently with noisy inputs, and are better than baselines on a novel episodic memory task.
\end{abstract}

\section{Introduction}
A fundamental question in computational neuroscience and neuromorphic computing is the question of how to train deep neural networks using only local learning rules in the learning scenario faced by the brain, where data is noisy and presented in an online-continual fashion. Local learning rules use only information spatially and temporally adjacent to the synapse at the time of the update (e.g., pre-synaptic and post-synaptic neuron activity). Online-continual learning occurs when a stream of single data points are presented during training in non-independent and identically distributed (non-i.i.d.) fashion (e.g., several datasets are presented one dataset at a time). Brains must learn under these conditions, and neuromorphic hardware embedded in real world systems have similar constraints \cite{davies2018loihi}.

Standard approaches to deep learning have not provided a solution to this problem. The standard approach trains neural networks with stochastic gradient descent (SGD) implemented by the backpropagation algorithm (BP) \cite{rumelhart1995backpropagation}. BP is a non-local learning algorithm generally considered biologically implausible \cite{crick1989recent, stork1989backpropagation, lillicrap2020backpropagation} and is difficult to make compatible with neuromorphic hardware \cite{neftci2019surrogate, schuman2022opportunities}. Further, the standard training paradigm is offline learning, where data is mini-batched, i.i.d., non-noisy and can be passed over for multiple epochs during training. Learning in the noisy, online-continual scenario is much more difficult than the offline scenario. Unlike offline learners, online-continual learners must avoid problems like catastrophic forgetting and are pressured to deal with noise and to be more sample efficient (faster learners).

Furthermore, recent work on this problem has not approached all the aspects of the problem simultaneously. For example, although bio-plausible algorithms have been developed for deep networks, these algorithms are typically tested and developed for offline settings (e.g., \cite{o1996biologically, whittington2017approximation, scellier2017equilibrium, sacramento2018dendritic}). Although progress has been made on online-continual learning, essentially all of this work uses BP in some capacity (see \cite{khetarpal2022towards, wang2023comprehensive, parisi2020online, gallardo2021self, mai2022online, hayes2022online}). Some works do test local learning algorithms in online and continual settings separately, but do not address the online and continual setting simultaneously or focus on shallow recurrent networks (e.g., \cite{bellec2020solution, yoo2022bayespcn, yin2023accurate}). Therefore, novel local learning models that perform online-continual learning without resorting to BP are needed.

We attempt to remedy this situation by making the following contributions: 1) Unlike previous works on online-continual learning, which tend to focus on classification, we study the more general and basic task of \textit{associative memory}, i.e., the basic process of storing and retrieving corrupted and partial patterns. We believe studying this basic task could yield ideas that apply across a wide range of tasks, rather than a single narrow task, like classification. 2) We propose a general approach to online-continual associative memory, based on the idea that a sparse, quantized neural code can both deal with noisy, partial input and prevent catastrophic forgetting. Further, we propose implementing sparse, quantized neural codes using discrete graphical models that learn via algorithms similar to maximum a posteriori (MAP) learning, where MAP learning has the advantage of using local learning rules. 3) We implement this approach in a novel neural network called the sparse quantized Hopfield network (SQHN), an energy based model that optimizes a novel energy function and utilizes a novel learning algorithm that combines neuro-genesis (neuron growth) and local learning rules, both engineered specifically to yield high performance in the noisy and online-continual setting. 4) We develop two memory tasks, which are new to the recent machine learning literature on associative memory models, the noisy encoding task and an episodic memory task. 5) We run a variety of tests showing that SQHN significantly outperforms baselines on these new tasks and matches or exceeds state of the art (SoTA) on more standard associative memory tasks. 

\section{Results}

\subsection{Toward A Foundation for Local, Online-Continual Memory Models}
Our goal is to design a model that can 1) deal with noisy, partial inputs in associative memory tasks, 2) learn in a sample-efficient way that avoids catastrophic forgetting in online-continual learning scenarios, and 3) uses only local learning rules. Our proposed approach has three parts. 

First, we propose that \textit{quantization} provides a principled approach to associative recall. Quantization is the process of mapping continuous valued inputs to a finite, discrete code, which is necessarily a process of pattern completion, i.e., a process where many distinct vectors are mapped to the same vector. The associative memory problem may also be cast as one of pattern completion, where the goal is to reconstruct stored data points, $x$, given corrupted or partial versions, $\tilde{x}$, of it, i.e., $[\tilde{x}_0, \tilde{x}_1,...] \rightarrow x$. Quantization can be used to perform this mapping via $[\tilde{x}_0, \tilde{x}_1,...] \rightarrow h^* \rightarrow x$, where $h^*$ is a single discrete latent code. Pattern completion is intuitively more difficult with continuous latent codes, since these codes may vary in an infinite number of ways, making it more difficult to map many corrupted versions of a data point to the same latent code and reconstruction (e.g., see experiments and discussion). 

Second, we primarily use \textit{parameter isolation} to avoid catastrophic forgetting, which is a strategy that has recent success in BP-based deep learning models (e.g.,\cite{lee2020neural, yoon2017lifelong, mallya2018packnet, mallya2018piggyback}). This strategy allocates subsets of new and old parameters to different tasks during training, as needed. By only using and updating small subsets of parameters each iteration, models are able to drastically avoid forgetting. However, there needs to be a principled method to decide which parameters to update or add at which times.

Third, we propose using an MAP learning algorithm as a local learning algorithm, in discrete-graphical models, which naturally implement quantization and parameter isolation. MAP learning works by first performing inference over hidden variables to find their specific values, $h^*$, that maximize the posterior $P(h^* | x, \theta) \propto P(h^*, x| \theta)$. In discrete graphical models these values are integers. Parameters are then updated to further increase the probability of the joint $P(h^*, x| \theta)$. Local learning rules are naturally used in this algorithm (supp. \ref{supp:learn}). Further, if integer values are represented by sparse, one-hot vectors, updates are also sparse, i.e. perform parameter isolation. 

\subsection{The Sparse Quantized Hopfield Network}
We develop a novel implementation of a discrete graphical model that uses primarily neural network operations. We call it the \textit{sparse quantized Hopfield network} (SQHN). SQHNs have architectural similarities to Hopfield networks and Bayesian networks. Unlike standard Bayesian networks, SQHNs are relatively easy to scale and more compatible with hardware that assumes vector matrix multiplication as the basic operation (e.g., GPUs and memristors). Unlike common Hopfield nets, SQHNs explicitly utilize quantization and implement a discrete, directed graphical model. Hidden nodes in SQHN models are assigned integer values during inference, represented by sparse one-hot vectors. Sparsity distinguishes SQHNs from prior quantized Hopfield networks (e.g., \cite{matsuda1999quantized, matsuda1999theoretical}), which assign an integer value to each neuron. The sparse code we use ensures subsets of parameters are isolated during training.

\begin{figure}[t]
\includegraphics[width=.95\textwidth]{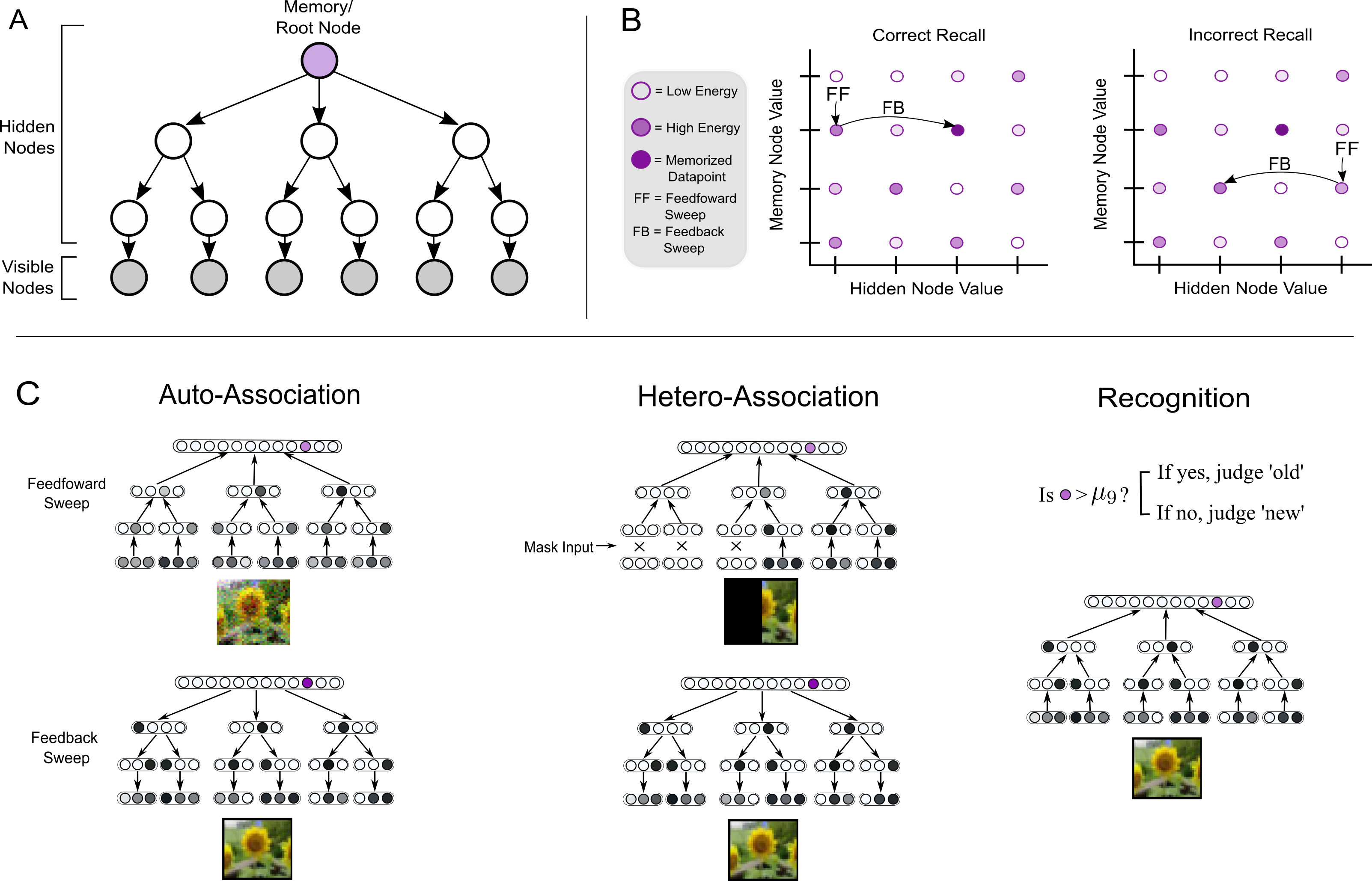}
\centering
\caption{\textbf{A.} Tree structured, directed, acyclic graph. \textbf{B.} Diagram of recall process in terms of the node values and energy. Nodes take integer values. Correct recall finds the set of node values at the global maximum of the energy. Correct recall typically only occurs if the memory node has the correct value, since the memory node is fixed after the feedforward (FF) pass, after which it then adjusts the values of hidden nodes through feed-back/top-down signals. \textbf{C.} Neural network diagrams of the SQHN during associative recall and recognition. During recall the FF sweep propagates signals up the hierarchy, where the memory/root node retrieves the most probable (high energy) value, and propagates the signals down encouraging hidden nodes to take the values associated with that particular memory/value.}
\label{fig:archDiagram}
\end{figure}

\subsubsection{Energy}
SQHNs implement direct graphical models. In this paper, we consider tree-architectures without loops (see Fig. \ref{fig:archDiagram}), though many other architectures are possible. Visible nodes are nodes clamped to portions of the input (e.g., image patches), which are assumed, though not required, to be continuous. Each hidden node $l$ represents categorical variables that take an integer value represented by a one-hot vector, $h_l^*$. We also notate clamped values at visible nodes as $h_l^*$. Conditional probability $p(h^*_l | pa_l)$ of node $l$ given its parent value is parameterized by synaptic weight matrices. For example, in the simple case where $l$ has one parent $p(h^*_l | pa_l)$ it is parameterized by matrix $M_{pa_l,l}$. The energy is a summation over conditional probabilities:
\begin{equation}\label{eq:energy}
E(h^{*}, \theta) = \frac{1}{L}\sum_{l=0}^L p(h^*_l | pa_l).
\end{equation}
Typically, in directed graphical models, like Bayesian Networks, the aim of MAP inference is to update the values of hidden nodes to maximize the joint probability of node states, which is the \textit{product} of the conditional probabilities rather than the sum \cite{bishop2006pattern}. We show that the summation of the conditional probabilities approximates a kind of joint probability that takes into account uncertainty over parameters (supp. \ref{supp:energy}). Taking into account this uncertainty is crucial for learning in online settings. Further, by approximating this joint distribution with the energy above, we can implement a model that performs inference using only standard artificial neural network operations that sum inputs to neurons rather than multiply, which would be needed with the standard joint distribution (supp. \ref{supp:infer}).

\subsubsection{Recall}
SQHN models update neuron activities during recall in a way akin to MAP inference, where neurons are updated to maximize the energy:
\begin{equation}
h^{*} = \argmax_{h^*} E(h^*, \theta, x),
\end{equation}
where $h^{*}$ is the set of one-hot vectors assigned to each node. We use a novel inference procedure, which we show empirically is a good approximate solution to this optimization problem (supp. fig. \ref{fig:energy}). This approximate inference method uses standard neural network operations and is computationally cheap, only involving a single feed-forward/bottom-up (FF) and feedback/top-down (FB) sweep through the network. Mathematically, this recall process can be understood as follows: each data point $x^t$ observed during training is stored at maxima of the energy. Therefore, given a corrupted input $\tilde{x}^t$, an SQHN can reconstruct the original by finding the latent code that maximizes the energy given $\tilde{x}^t$, which, during correct recall, will be the same latent code as the original data point (figure \ref{fig:archDiagram} B). Mechanistically, recall works by propagating signals up to the memory node which is assigned a memory value. Then a signal is propagated down, encouraging lower level hidden nodes to take values associated with that memory value (figure \ref{fig:archDiagram} C).

For a pseudo-code description, see supplementary algorithm \ref{alg:recall}.

\subsubsection{Learning}

SQHNs update their parameters using an algorithm akin to MAP learning, where each training iteration first perform inference to maximize energy w.r.t. activities and then update weights to further increase energy. 
\begin{equation}
\begin{split}
\theta^T = \argmax_\theta \sum^T_{t=0} E(\theta, h^{*,t}, x^t),
\end{split}
\end{equation}
where matrices at hidden layers must meet certain normalization constraints, making this a constrained optimization problem. Importantly, weights are updated to maximize energy over \textit{all previously observed data} points rather than just the one present at the current iteration. This helps prevent forgetting of previous data points in online scenarios. However, the update is performed using only the activities and data point from the current iteration (i.e., there is no buffer of previous data points or activities). The solution is a local Hebbian-like update rule:
\begin{equation}
\Delta M_{pa_l,l} = \frac{1}{c_{pa_l}^* + 1} (h^*_{l} - M_{pa_l, l} h^{*}_{pa_l}) h^{*\top}_{pa_l},
\end{equation}\label{eq:SQHNDeltaW}
where $c_{pa_l}^*$ is a count of the number of iterations the parent node value was activated during training. Because $h_{pa_l}^*$ is a sparse, one-hot vector, this is a sparse weight update that only alters the values in \textit{a single column} of matrix $M_{pa_l, l}$.

Importantly, instead of randomly initializing weights, we initialize all weights equal to 0, then grow new neurons and synapses as needed. That is, before weights are updated, neurons are grown during inference. A new neuron is grown at node $l$ if no neuron at the node has a value greater than some threshold. We use an exponentially decaying threshold based on the Dirichlet prior:
\begin{equation}
\epsilon = \frac{\alpha}{ (t + \alpha)},
\end{equation}
where $t$ is current training iteration and $\alpha$ is a hyper-parameter. Ablations show that if neuron growth, decaying growth threshold, or learning rate decay are removed, the model performs noticeably worse in online-continual settings (suppl. fig. \ref{fig:onlineContAbl}).

For pseudo-code description see supplementary, algorithm \ref{alg:learn}.

\subsubsection{Episodic Memory Task} 
Below, we test SQHNs on a novel binary classification task, where the model must recognize whether some particular data point was observed previously during training or not. To perform recognition, we use the value of the neuron at the root/memory node with the maximum value. This maximum value tells us how similar or probable the features of the current data point is to the features of the most similar previously observed data point (supp. \ref{supp:recog}). If the max activity at the memory node is above some threshold, the data point is judged to be old. If below, it is judged to be new. The threshold we use is the moving average, $\mu_l$, of the activity values observed for each neuron:
\begin{equation}
\mu_{l,j}^t =  \frac{1}{c^t_{l,j}} \sum_{n=1}^t h^{*,n}_{l,j} = \frac{c^t_{l,j}-1}{c^t_{l,j}}\mu_{l,j}^{t-1} + \frac{1}{c^t_{l,j}} h_{l,j}^{*,t},
\end{equation}
where the equation on the right shows how to compute this average online. We show that this value is an estimate of the desired probability of $p(x^t=old| \theta^{(t-1)}, x^t) = .5$, which is the threshold at which it becomes more probable than not that the data point is old rather than new (supp. \ref{supp:recog}).

For pseudo-code description of recognition, see supplementary algorithm \ref{alg:recogn}.

\begin{table}[t]
\centering
\begin{adjustbox}{width=1\textwidth}
\small
  \begin{tabular}{c | c c | c c | c c}
    \toprule
    \multicolumn{7}{c}{Recall MSE - Moderate Corruption}\\
    \toprule
    & \multicolumn{2}{|c|}{White Noise} & \multicolumn{2}{|c|}{Pixel Dropout} & \multicolumn{2}{|c|}{Mask} \\
    \toprule
    & CIFAR-10 & TinyImgNet & CIFAR-10 & TinyImgNet & CIFAR-10 & TinyImgNet \\
    \toprule
    GPCN(offline) \cite{yoo2022bayespcn} & $.0121^{(\pm.0001)}$ & $.0067^{(\pm.0004)}$ & $.0001^{(\pm.0000)}$ & $.0000^{(\pm.0000)}$ & $.0009^{(\pm.0000)}$ & $.0001^{(\pm.0000)}$\\
    BayesPCN \cite{yoo2022bayespcn} & $.0337^{(\pm.0007)}$ & $.6606^{(\pm.0267)}$ & $.0001^{(\pm.0000)}$ & $.0000^{(\pm.0000)}$ & $.0019^{(\pm.0000)}$ & $.0000^{(\pm.0000)}$\\
    BayesPCN(forget) \cite{yoo2022bayespcn} & $.0188^{(\pm.0002)}$ & $.0176^{(\pm.0001)}$ & $.0019^{(\pm.0000)}$ & $.0008^{(\pm.0000)}$ & $.0465^{(\pm.0001)}$ & $.0235^{(\pm.0001)}$ \\
    MHN & $.1457^{(\pm.0097)}$ & $.0955^{(\pm.0073)}$ & $.4512^{(\pm.0558)}$ & $.4545^{(\pm.0456)}$ & $.5538^{(\pm.0149)}$ & $.6101^{(\pm.0091)}$ \\
    MHN-Manhtn & $.0000^{(\pm.0000)}$ & $.0000^{(\pm.0000)}$ & $.0000^{(\pm.0000)}$ & $.0000^{(\pm.0000)}$ & $.0020^{(\pm.0028)}$ & $.0016^{(\pm.0012)}$ \\
    MHN-GradInf \cite{yoo2022bayespcn} & $.0000^{(\pm.0000)}$ & $.0000^{(\pm.0000)}$ & $.0000^{(\pm.0000)}$ & $.0000^{(\pm.0000)}$ & $.0000^{(\pm.0000)}$ & $.0001^{(\pm.0000)}$ \\
    \textbf{SQHN L1} & $.0000^{(\pm.0000)}$ & $.0000^{(\pm.0000)}$ & $.0000^{(\pm.0000)}$ & $.0000^{(\pm.0000)}$ & $.0000^{(\pm.0000)}$ & $.0000^{(\pm.0000)}$\\
    \textbf{SQHN L2} & $.0002^{(\pm.0001)}$ & $.0000^{(\pm.0000)}$ & $.0000^{(\pm.0000)}$ & $.0002^{(\pm.0002)}$ & $.0001^{(\pm.0000)}$ & $.0008^{(\pm.0002)}$\\
    \textbf{SQHN L3} & $.1076^{(\pm.0022)}$ & $.0000^{(\pm.0000)}$ & $.0000^{(\pm.0000)}$ & $.0000^{(\pm.0000)}$ & $.0000^{(\pm.0000)}$ & $.0000^{(\pm.0000)}$\\
    \bottomrule
    \toprule
    \multicolumn{7}{c}{Recall MSE - High Corruption}\\
    \toprule
    & \multicolumn{2}{|c|}{White Noise} & \multicolumn{2}{|c|}{Pixel Dropout} & \multicolumn{2}{|c|}{Mask} \\
    \toprule
    & CIFAR-10 & TinyImgNet & CIFAR-10 & TinyImgNet & CIFAR-10 & TinyImgNet \\
    \toprule
    BayesPCN \cite{yoo2022bayespcn} & $.0755^{(\pm.0002)}$ & $.0242^{(\pm.0002)}$ & $.0000^{(\pm.0000)}$ & $.0000^{(\pm.0000)}$ & $.0006^{(\pm.0000)}$ & $.0001^{(\pm.0000)}$\\
    MHN-Manhtn & $.0000^{(\pm.0000)}$ & $.0000^{(\pm.0000)}$ & $.1840^{(\pm.0368)}$ & $.1248^{(\pm.0884)}$ & $.7644^{(\pm.2844)}$ & $.7588^{(\pm.2188)}$ \\
    MHN-GradInf \cite{yoo2022bayespcn} & $.0052^{(\pm.0000)}$ & $.0000^{(\pm.0000)}$ & $.3840^{(\pm.0010)}$ & $.5630^{(\pm.0036)}$ & $.3957^{(\pm.0000)}$ & $.6378^{(\pm.0000)}$\\
    \textbf{SQHN L1} & $.0000^{(\pm.0000)}$ & $.0000^{(\pm.0000)}$ & $.0000^{(\pm.0000)}$ & $.0000^{(\pm.0000)}$ & $.0000^{(\pm.0000)}$ & $.0000^{(\pm.0000)}$\\
    \textbf{SQHN L2} & $.0904^{(\pm.0078)}$ & $.0002^{(\pm.0004)}$ & $.0000^{(\pm.0000)}$ & $.0000^{(\pm.0000)}$ & $.0000^{(\pm.0000)}$ & $.0008^{(\pm.0006)}$\\
    \textbf{SQHN L3} & $.3324^{(\pm.0056)}$ & $.0240^{(\pm.0032)}$ & $.0000^{(\pm.0000)}$ & $.0000^{(\pm.0000)}$ & $.0000^{(\pm.0000)}$ & $.0000^{(\pm.0000)}$\\
    \bottomrule
\end{tabular}
\end{adjustbox}
\caption{\textbf{Top} Recall MSE of 1024 images from CIFAR-10 and Tiny ImageNet datasets. At test time, images are either corrupted with white noise (variance .2), $25\%$ of the pixels dropped, or the right $25\%$ of the pixels masked. \textbf{Bottom} Recall MSE of 128 images from CIFAR-10 and Tiny ImageNet datasets. At test time, images are either corrupted with white noise (variance .8), $75\%$ of the pixels dropped, or the right $75\%$ of the pixels masked.}
\vspace{-10pt}
\label{tab:recMSE}
\end{table}

\begin{figure}[t]
\includegraphics[width=.99\textwidth]{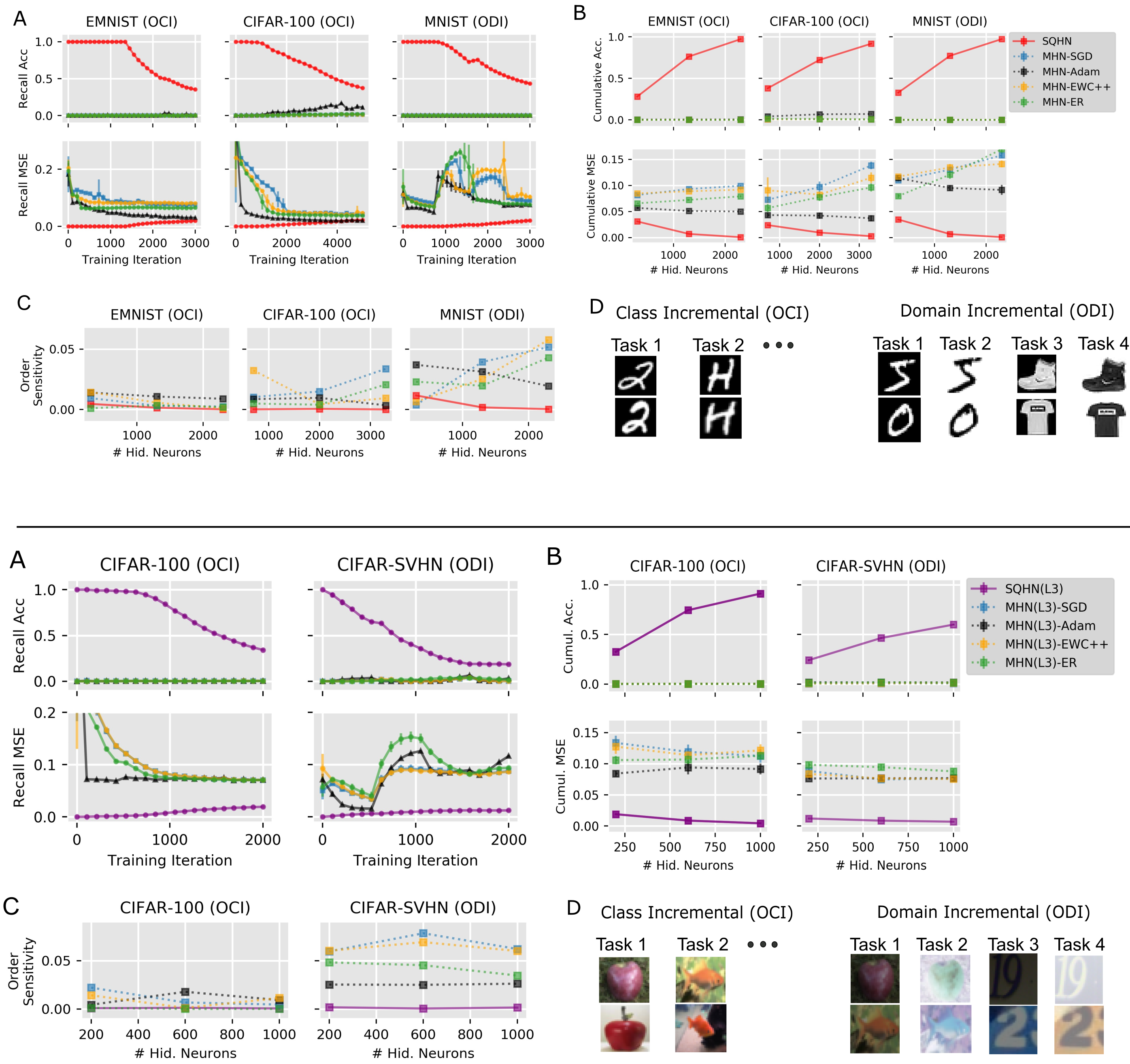}
\caption{Online Continual Auto-Association. \textbf{Top} One hidden layer models with small (300), medium (1300), large (2300) node sizes. \textbf{Bottom} Three hidden layer models  with small (200), medium (600), large (1000) node sizes. \textbf{A} Recall accuracy and recall MSE during training on models with medium size hidden nodes. \textbf{B} Cumulative recall MSE and recall accuracy for each model size. \textbf{C} Order sensitivity for each model size. \textbf{D} Online class incremental (OCI) versus online domain incremental (ODI) settings.}
\centering
\label{fig:onlineCont}
\end{figure}

\subsection{Experiments}

We tested SQHN on several tasks and compare its performance to SoTA and baselines. Specifically, we tested performance on: 1) Auto-Association. 2) Hetero-association. 3) Online Continual Auto-Association. 4) Noisy Encoding. 5) Episodic Memory.

\subsubsection{Auto-Association and Hetero-Association Comparison}
We first compared SQHN models to SoTA associative memory models on auto-associative and hetero-associative recall tasks. For both tasks, unaltered data points from a set $X_{train}$ are presented to the model during training. In auto-association, during testing the model is given corrupted versions, $\tilde{X}_{train}$, of training data, and the model is tasked with reconstructing the original data points. Here, corruption is added to the images with white noise. For hetero-association tasks, during testing portions of the input data are treated as missing. Following, the recent work of \cite{yoo2022bayespcn}, we remove a certain number of pixels from the input data randomly (pixel dropout) or we remove a certain number of the right most pixels (mask).

Three SQHNs are tested: an SQHN with one hidden layer (SQHN L1), two hidden layers (SQHN L2), and three hidden layers (SQHN L3). We compare to two types of SoTA models: predictive coding networks (PCN) and modern Hopfield networks (MHNs). PCNs are neural network models \cite{rao1999predictive}, that implement a kind of probabilistic generative model with continuous latent variables \cite{friston2009predictive}. Three types are compared: offline trained PCN (GPCN) \cite{salvatori2021associative} and two online trained versions (BayesPCN, BayesPCN with forgetting) \cite{yoo2022bayespcn}. Continuous MHNs \cite{ramsauer2020hopfield} have similarities to auto-encoders with a single hidden layer where a softmax activation is used. We compare to the original model of \cite{ramsauer2020hopfield} (MHN), a version of this model by Millidge et al. \cite{millidge2022universal}, which showed better performance by using a Manhattan distance measurement in its recall operation (MHN-Manhtn) (see methods), and the MHN of \cite{yoo2022bayespcn} that used a novel gradient-based inference procedure (MHN-GradInf).  

Results are in table \ref{tab:recMSE}. PCN models struggled on noisy, auto-association task. The MHN that uses Manhattan distance performed very well, and the MHN-GradInf and the one level SQHN model performed perfectly on all auto-association tests. The multi-level SQHNs were more sensitive to white noise on smaller CIFAR-10 images, but still performed well with moderate corruption, and performed very well when they have larger lower layer receptive field sizes, which they use on the larger Tiny Imagenet images.

The GPCN and BayesPCN achieved very low recall MSE on most masking tasks. MHN-grad and the MHN with Manhattan distance performed well on moderate masking, but failed completely on the high masking scenario. \textit{The one and three level SQHN models performed perfectly on all masking tasks, while the two level performed nearly perfectly. SQHN models were the only models to match SoTA performance across both auto-associative and hetero-associative tasks.}

\subsubsection{Online, Continual Auto-Association} 
Next, we test SQHNs on online, continual auto-association. The application for online learning algorithms are typically embedded learning systems (e.g., robots, sensing devices, etc.), where computationally and memory efficient algorithms are preferred. PCN networks are highly computationally expensive, requiring hundreds of neuron updates per training iteration (see \cite{salvatori2021associative, yoo2022bayespcn}), making them impractical as an online auto-associative memory system. Thus, we compare SQHNs to the computationally efficient MHN model. However, we cannot perform the batch update that is typically used in auto-associative memory tests of MHNs. Instead, following previous work, (e.g., \cite{krotov2016dense, ramsauer2020hopfield}), we train MHNs with BP to reduce reconstruction error. As baseline comparisons, we train MHNs with BP/SGD and BP with an Adam optimizer. We also train with several compute-efficient algorithms common to continual learning: online elastic weight consolidation (EWC++) \cite{chaudhry2018riemannian}, which is a kind of regularized SGD, and episodic recall (ER) \cite{chaudhry2019tiny}, which uses a small buffer to store a mini-batch of previously observed data points and SGD to update weights with the mini-batch.

We test on two kinds of online-continual auto-associative tasks: online class incremental (OCI) and online domain incremental (ODI). In both, data from each task is presented incrementally, one task at a time. In the OCI setting, each task consist images from the same class. In the ODI setting, there are four data sets, each are composed of visually distinct images (e.g., dataset 1 has bright images, dataset 2 has dim images, etc.). During training, models perform a single pass over each dataset, observing only a single data point each iteration, before switching to the next dataset. At testing, a noisy version of previously observed data are presented.

Performance is measured using recall MSE ($\mathcal{L}_{MSE}$) and following previous works \cite{salvatori2021associative, yoo2022bayespcn} recall accuracy:
\begin{equation}
\label{eqn:Lrecall}
\mathcal{A}^T = \frac{1}{T} \sum^T_{t=0} \mathbf{1}(\frac{1}{d} \Vert x^{t} - x^{t,new}\Vert^2 < \gamma),
\end{equation}
where the indicator function $\mathbf{1}$ is one if the recall MSE is below threshold, $\gamma$, and zero otherwise. We also use a 'cumulative' (i.e., average) performance measure, which is common in online learning scenarios:
\begin{equation}
\label{eqn:Lmse}
\mathcal{C}_{MSE} = \frac{1}{T} \sum^T_{t=0} \mathcal{L}_{MSE}^t, \text{ }\text{ }\text{ }\text{ }\text{ }\text{ }\text{ }  \mathcal{C}_{Acc} = \frac{1}{T} \sum^T_{t=0} \mathcal{A}^t,
\end{equation}
where $\mathcal{L}_{MSE}^t$ and $\mathcal{A}^t$ is the recall MSE and recall accuracy, respectively, at iteration $t$ given the model parameters and input data at iteration $t$. Cumulative measures are sensitive not just the final performance, but also to sample efficiency, i.e., how quickly the models improves performance. Finally, we use a novel measure of sensitivity to data ordering ($S_{MSE}$):
\begin{equation}
\begin{split}
S_{MSE} &= \vert \mathcal{C}_{MSE}^{OnCont} - \mathcal{C}_{MSE}^{On} \vert\\
\end{split}
\end{equation}
This sensitivity measure is 0 when the model achieves the same cumulative MSE in the online (On) and online-continual (OnCont) settings, and increases as the performance differs. Models insensitive to ordering are highly useful in realistic scenarios where the way data is presented to the model is difficult to control and predict.

Results are shown in Figure \ref{fig:onlineCont}. \textit{SQHNs were highly insensitive to the ordering of the data in online-continual scenarios (Figure \ref{fig:onlineCont} C, top and bottom). They perform one-shot memorization until their capacity is reached, then their performance decays slowly (Figure \ref{fig:onlineCont} A, top and bottom), yielding very good cumulative performance (figure \ref{fig:onlineCont} B top and bottom). The BP-based MHNs learned too slowly to recall any data points and were much more sensitive to ordering, especially in the ODI setting. This lead to poor cumulative scores.} These results provide evidence the SQHN is a highly effective online-continual learner, especially in scenarios where fast learning is essential, and may have general benefits over SGD/BP based approaches.
\begin{figure}[t]
\includegraphics[width=.95\textwidth]{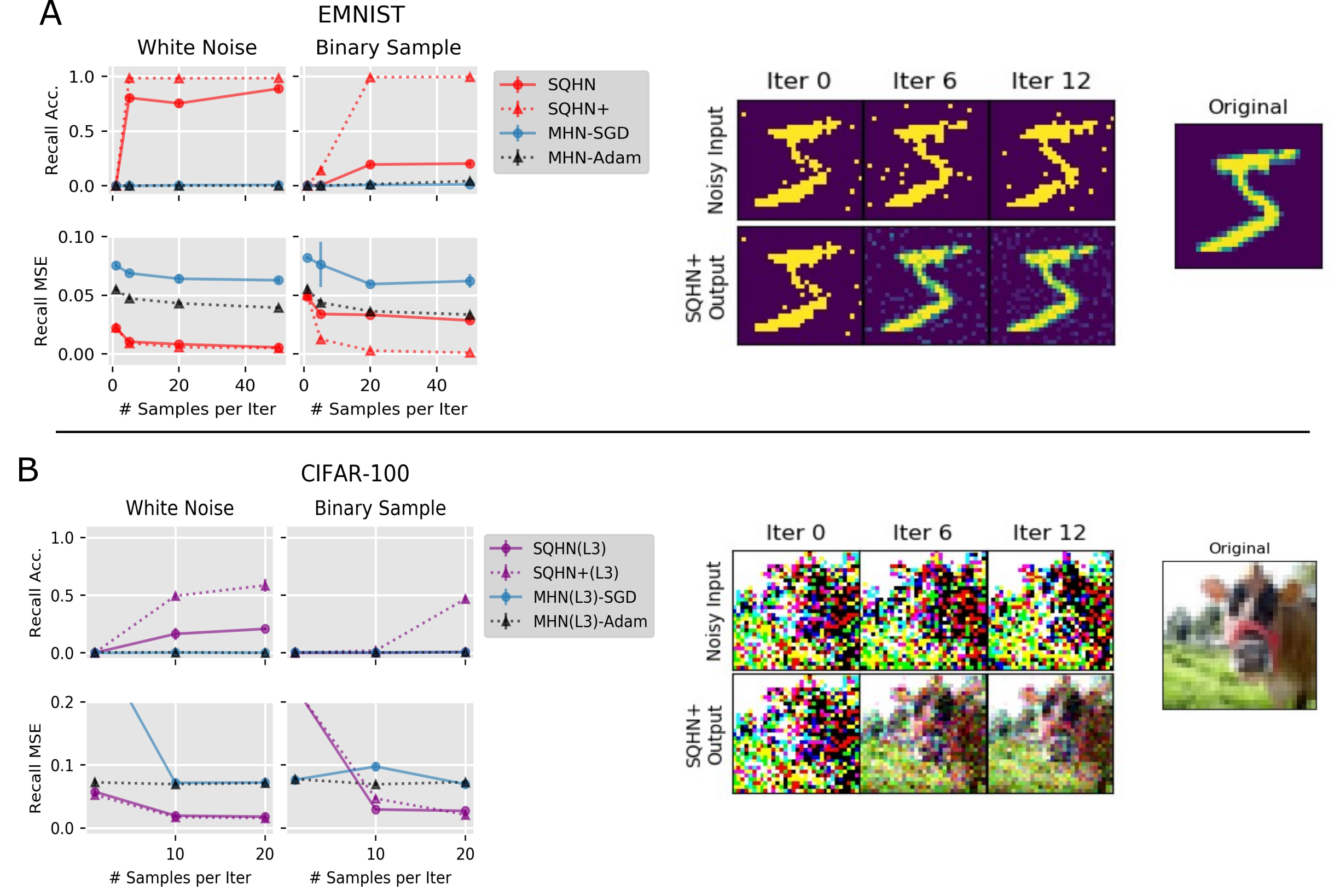}
\caption{Noisy encoding task. \textbf{A.} Recall accuracy and recall MSE for one hidden layer models under white and noise and binary sample conditions. Example of the reconstruction during test time for SQHN+ model, in the case of 1, 6, and 12 samples. \textbf{B.} Recall accuracy and recall MSE for three hidden layer models under white and noise and binary sample conditions with reconstruction example for SQHN on the right.}
\centering
\label{fig:nsEncode}
\end{figure}
\begin{figure}[t]
\includegraphics[width=.89\textwidth]{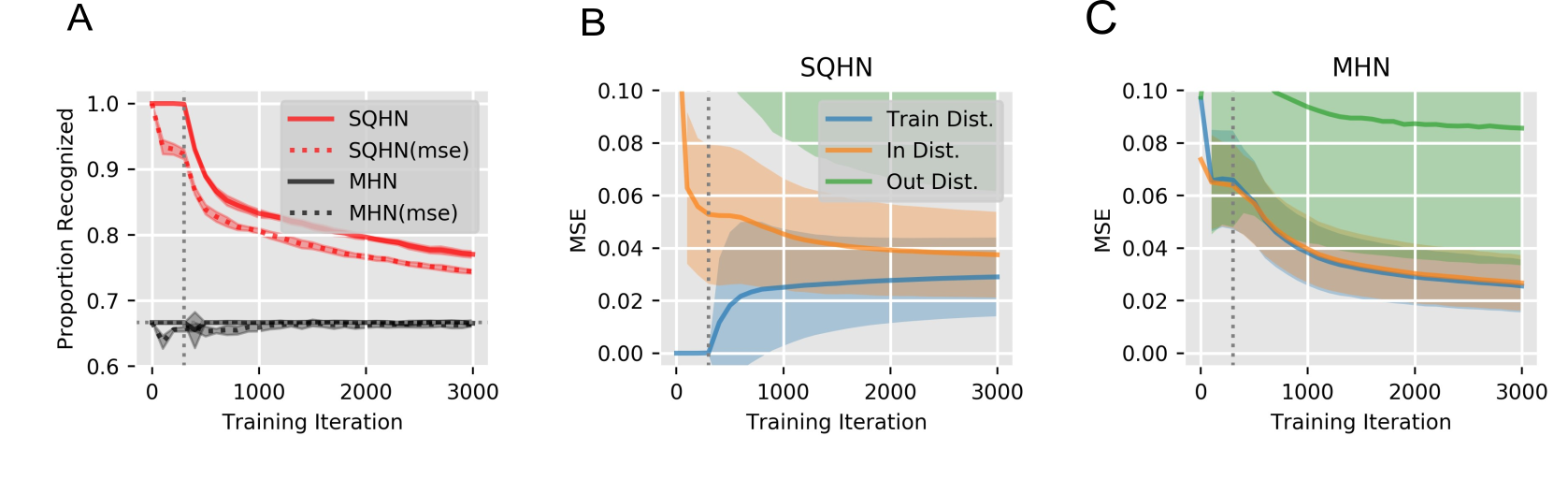}
\centering
\caption{Recognition task. \textbf{A} The recognition accuracy for SQHN and MHN are shown in networks with 300 neurons at the hidden layer (300th iteration marked by vertical dotted line). \textbf{B} The MSE for the SQHN model on the training MNIST data (Train Dist.), hold-out MNIST data (In Dist.), and the F-MNIST data (Out Dist.) \textbf{C} The MSEs for the MHN model.}
\centering
\label{fig:recognition}
\end{figure}

\subsubsection{Noisy Encoding} 
We tested SQHN models on a noisy encoding task, which is novel with respect to recent machine learning work on neural network based associative memory. In this task, each training iteration some number of noisy samples of an image are generated and presented to the network one at a time. Images are Gaussian (white noise added) or binary samples. The models must encode the images, and reconstruct them at test time, when they are presented with the original non-corrupted versions. At no point during training is the model shown the non-corrupted version of the data. The purpose of this task is to specifically test the model's ability to demonstrate unsupervised learning in a noisy setting.

We compared the one and three hidden layer SQHN and MHN models on EMNIST and CIFAR-100, respectively. The one layer models had 300 hidden layer neurons, while the three hidden layer architectures had 150 neurons at each node. Networks were presented with 300 and 150 noisy images, respectively, so an inability to recall the images was more dependent on an inability to remove noise during learning rather than on capacity constraints.  

In addition to testing the SQHN and MHN models, we tested an SQHN model with a slight alteration (SQHN+), where after the hidden states are computed for the first image sample, the hidden states are held fixed for the remaining duration of the training iteration. This ensures the latent code did not change during the encoding of the same image.

\textit{SQHN models, especially SQHN+, were highly effective at removing noise during learning} (figure \ref{fig:nsEncode}), \textit{and both SQHN models significantly outperformed MHNs trained with BP and BP-Adam. This provides evidence the SQHN can be an effective learner under noise and outcompete similar BP-based models.}

\subsubsection{Episodic Memory}

Next, we tested models on a novel episodic memory task based on human and animal memory experiments, which we call episodic recognition. Recognition is hallmark ability of animal memory but has not, as far as we can find, been developed into a formalized machine learning test. In our task, models are presented with a sequence of training data points, $X_{train}$, in an online fashion. Afterward, the model is tested on a binary classification task, where the model must classify the data point presented as old (in the training set) or new (not from the training set). The testing dataset is an equal portion of observed training images, $X_{train}$, new unobserved images from a related (in distribution) data set, $X_{new-in}$, and new images from an unrelated (out of distribution) data set, $X_{new-out}$. A high performing memory system will achieve significantly above chance accuracy, which will decay gracefully as the size of the data set increases.

Since there are no previous baselines to compare SQHN against for the episodic recognition task that we know of, we ran a simple comparison between an SQHN and MHN with single hidden layers (figure \ref{fig:recognition}). Since MHN has not been used for an episodic recognition task, we created two methods for performing recognition in an MHN with one hidden layer. The first method uses the activities at the hidden layer as a measure of similarity to stored data points. The second method keeps a moving average of the recall MSE during training. If the hidden layer activity is above or if the MSE is below a threshold the model judges the data point is new. Grid search is used to set the threshold.

\textit{SQHN models performed perfectly until capacity is reached (vertical dotted line). Performance then decayed gradually, whereas MHN models were unable to do better than chance} (figure \ref{fig:recognition} A). The performance differences seem due to the fact that SQHN models 'overfit' the training data (figure \ref{fig:recognition} B), early in training allowing it to recognize $X_{new-in}$ as less probable than $X_{old-in}$. The MHN model on the other hand performed identically with old and new in-distribution data. This allowed MHN to generalize better earlier, but it prevented the MHN from being able to distinguish previously observed data points from similar data points that were not present in the training distribution.

\begin{figure}[t]
\includegraphics[width=.95\textwidth]{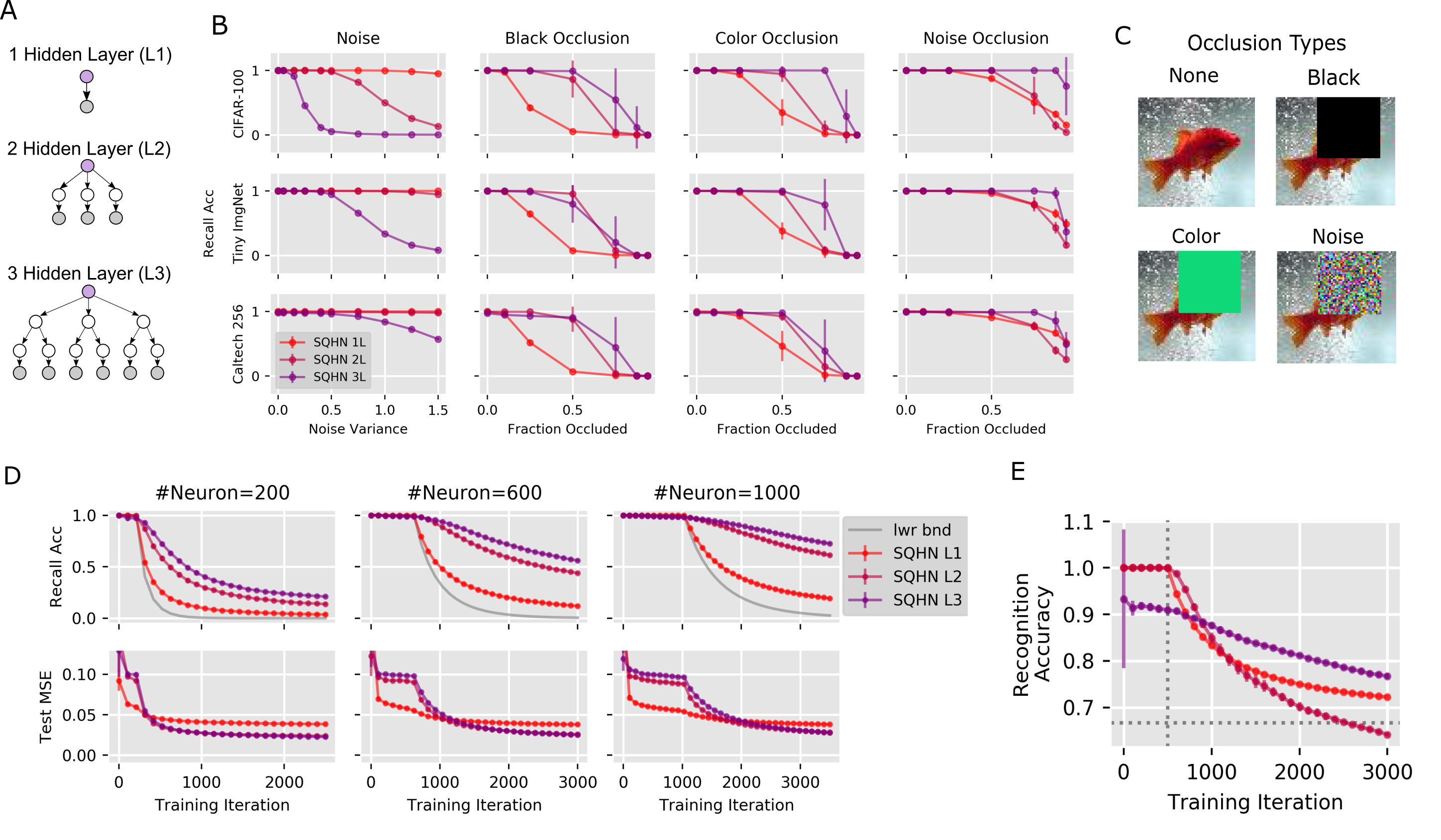}
\centering
\caption{Comparison of SQHN architectures with different hidden layers. \textbf{A.} Depiction of various SQHN architectures.  \textbf{B.} Recall accuracy across three data sets (CIFAR-100, Tiny Imagenet, Caltech 256) under the white noise and several occlusion scenarios. \textbf{C.} Visualization of the black, color, and noise occlusions. \textbf{D.} Recall accuracy during online training without noise (top row) and MSE on a test a test set (bottom row). The lower bound on recall accuracy posited by theorem \ref{thrm:forget} (supp. \ref{supp:theorResults}) marked by gray line. Models tested with different maximum number of neurons per node (200, 600, 1000). \textbf{E.} Recognition accuracy for three SQHN models, where 500 neurons are allocated to each node (vertical dotted line marks when all 500 neurons are grown). CIFAR-100 data used for train and in-distribution set, while a flipped pixel version of the Street View House Numbers (SVHN) dataset used for out of distribution. Best guessing strategy yields $66\%$ accuracy (horizontal dotted line).}
\label{fig:archCompare}
\end{figure}

\subsection{Further Comparison of SQHN Architectures}
Finally, we did a thorough comparison of SQHN models that have one, two, or three hidden layers (figure \ref{fig:archCompare} A), to better establish what the advantages and disadvantages are of adding more hidden layers to the SQHN. 

First, we tested how sensitive SQHN models are to corruption in several auto-association tasks (figure \ref{fig:archCompare} B). During training each model memorizes 1000 images from the CIFAR-100, TinyImageNet, or Caltech256 datasets. During testing images are either corrupted with white noise or, what we call, an occlusion. In the occlusion scenario, pixels in a rectangular region of random shape and position are set equal to either 0 (black occlusion), a random color (color occlusion), or white noise (noise occlusion). (Note this is distinct from masking, since models treat pixels as corrupted rather than missing.) Results are in figure \ref{fig:archCompare} B. All SQHN models performed recall perfectly when corruption was small. Adding more hidden layers tended to improve performance on occlusion tasks, likely because trees represent data as a part-whole hierarchies and therefore can better ignore corrupted parts of the input during inference. Architectures whose bottom hidden layer nodes had larger receptive fields performed better on the noise task. The one layer SQHN had the largest receptive field so it performed the best. For the largest images (CalTech256), however, receptive field sizes at the bottom layers were large for all models (minimum 8x8), and all models performed similarly on noisy recall.

Next, we observed auto-associative recall performance during online (i.i.d.) learning for SQHN models with different numbers of layers and node sizes (figure \ref{fig:archCompare} D). All SQHN models performed one-shot memorization until minimum capacity is reached (which is the number of neurons at hidden nodes, see theorem 1 supp. \ref{supp:theorResults}). Deeper SQHN's recall accuracy decayed at a much slower rate. We suspected this was due to their ability to reuse primitive feature representations at lower nodes to generalize better across multiple data points. We tested these models on test/hold-out data and indeed found the tree architectures generalized better to new data than the one hidden layer model.

Finally, we compared SQHN models on the episodic recognition task. (figure \ref{fig:archCompare} E). Models had 500 neurons at each node. The $X_{train}$ data were CIFAR-10 train images, $X_{new-in}$ were CIFAR-10 images from a hold/out test set, and $X_{new-out}$ were images from the the SVHN dataset with flipped pixels. Figure \ref{fig:archCompare} (E) shows all SQHN models were able to perform well above chance accuracy ($66\%$, horizontal dotted line) even when far more training data was presented than the number of neurons at the memory node. 

In sum, \textit{adding more levels to tree structured SQHNs significantly improves auto-associative recall with occlusion, slows decay of recall accuracy after the network hits capacity, and improves generalization without significant loss in recognition ability. While adding levels can make the SQHN more sensitive to noise, this seems limited to small images.}

\section{Discussion}
Artificial neural networks were originally designed to mimic the way biological neural circuits process information \cite{mcculloch1943logical}. The way biological circuits \textit{learn} to process information, however, is an open question. In particular, it is unknown how the brain uses local learning rules effectively in noisy, online-continual settings. In this paper, we proposed a general local learning approach to the basic task of storing and retrieving patterns in noisy, online-continual scenarios. We proposed using a sparse, quantized neural code to deal with noisy and partial inputs and to prevent catastrophic forgetting, and implementing this strategy via a discrete graphical model that performed MAP learning, an algorithm that uses local learning rules. We implemented this approach in the novel SQHN model.

Our results support the effectiveness of our approach and the SQHN. First, we found the quantized neural code of the SQHN was advantageous in auto-associative recall over similar models that use a continuous latent codes. In particular, PC models, like the SQHN, implement directed graphical models and learn via MAP learning \cite{millidge2022theoretical}. However, because PC models use a continuous latent code, they were much more sensitive to noise than SQHN models. This lends credence to the idea that using a quantized neural code helps significantly with auto-association. MHN models performed similarly to SQHNs on the noise auto-association task. This is likely because under the hyper-parameter settings that yielded the best performance, these MHNs essentially implemented a discrete latent code (supplementary \ref{supp:SQHNvHop}). However, the operation MHNs use to map inputs to latent codes, is not as effective as that of SQHNs in the high masking settings.

Second, we found SQHNs significantly outperformed similar MHN architectures trained with BP-based algorithms on online-continual learning tasks. The SQHN trained faster, demonstrated one-shot memorization, showed only a small, stable forgetting rate, and was largely insensitive to ordering. It achieved all of this while using little extra memory, no episodic memory buffer, and little compute. Part of these performance advantages may be attributable to the parameter isolation approach generally. However, the SQHN also uses an effective learning rate schedule to prevent forgetting (fig.\ref{fig:onlineContAbl}), and we also find mathematically that using MAP inference to set the one-hot values at hidden nodes is a principled way of deciding which parameters to update. In particular, it yields a set of activities and updates that require only a small change to existing parameters (supp. \ref{supp:theorResults}). This suggests the exciting prospect that MAP learning in discrete models, like SQHNs, provides a justified and bio-plausible way to perform parameter isolation, which unlike previous parameter isolation methods (e.g.,  \cite{lee2020neural, yoon2017lifelong, mallya2018packnet, mallya2018piggyback}), does not require the computation of non-local global loss gradients to decide how to isolate parameters.

Third, the sparse quantized code reduced the negative effects of noise during the noisy encoding process by yielding a hidden latent code that was largely stable across noisy samples. If we fixed the latent code to be perfectly stable, as we did with SQHN+, then performance improves even more. The sparse updates also helped prevent noise from interfering with previously recorded memories.

Finally, the SQHN proved to be highly effective in the episodic recognition task. The memory node of the SQHN stored an explicit, itemized record of previously observed feature representations of input data (see supp. \ref{supp:recog}). During recall the memory node performs a nearest neighbor operation by finding the item with the highest energy. This operation turned out to yield a straightforward method for detecting new versus old data points, is similar to classic cognitive models of episodic recognition (e.g., \cite{shiffrin1997model}), and it performed well even when the memory node was pushed past its capacity. Models like the MHN trained with BP, on the other hand, did not naturally learn an explicit record of previously observed feature prototypes and struggled to distinguish new from similar, old data points.

Importantly, SQHN's energy function yields a straightforward way to implement a directed graphical model with largely standard artificial neural network operations. In particular, since the energy was a sum of probabilities, rather than a product, neurons in the SQHN architecture summed inputs from children and parent nodes rather than multiplied (as belief networks do \cite{bishop2006pattern}), yielding standard neural network operations which are easily scaled and more compatible with hardware built specifically to handle vector matrix multiplies. SQHN does so without the need to move probabilities to the log domain, which can sometimes yield unusual properties like the need to represent very large negative numbers (i.e., overflow issues). The energy can also be justified as an approximation to the joint distribution when uncertainty (i.e., a prior distribution) is placed over parameters (supp. \ref{supp:energy}). Taken together these results suggest that our general approach, and the SQHN implementation of it, could provide a highly promising basis for building neuromorphic online-continual learners.

It is also interesting to note the striking similarities between SQHNs and models of memory from neuroscience. In particular, in addition to utilizing sparse codes and local learning rules like the brain, the memory node of the SQHNs has similarities to hippocampus (HP). HP is a region of the brain closely tied to episodic memory. We showed in our experiments the memory node, like the HP, is highly effective for episodic recognition. Further, the HP, like the memory node in SQHNs, is often proposed to be at the top of the cortical hierarchy \cite{mcnaughton2010cortical}, and some theories propose a central function of the HP is to retrieve and return previously stored patterns given partial, noisy inputs from cortex \cite{teyler1986indexing, teyler2007indexing}. This is precisely what the memory node of SQHNs does (see figure \ref{fig:archDiagram}). Maybe most interesting, we find that on average neurons grow more rapidly and for longer periods in the memory node than the rest of the network and synapses in the memory node tend to have larger step sizes (more flexibility) on average (fig. \ref{fig:emerge}). This is highly consistent with observations that HP grows neurons into adulthood while the rest of the cortex stops in early adulthood \cite{ming2011adult}, and the observation that synapses are highly flexible in HP compared to the rest of the cortex \cite{kumaran2016learning}. Importantly, the SQHN was not engineered to have these properties. Rather these properties emerged from the SQHN learning algorithm as it solved the online-continual learning problem. 

Future work will need to assess whether and how the SQHN provides possible new ideas or insights into these topics in neuroscience. Further, on the machine learning side, future work will need to assess the SQHN on tasks that require generalization. Although we found SQHNs with simple tree architectures to be highly efficient at storing and retrieving training data, and preventing forgetting, we believe somewhat more complicated SQHN architectures will be needed for high performance on tasks like classification or self-supervised learning where generalizing to new data points is needed. Nonetheless, the results here suggest SQHNs could provide a promising approach for learning these tasks in the online-continual setting.

\bibliographystyle{plain}
\bibliography{sample}

\appendix
\newpage
\section{Methods}

\subsection{Code}
Code will be publicly released upon publication of the paper. Code was written in Python 3.7.6 using Pytorch version 1.10.0 to implement models and perform differentiation for models that use backpropagation. All simulations were run on a small GPU type NVIDIA GeForce RTX 270 with MAXQ design.

\subsection{Datasets and Hyperparameters}
Image values ranged between 0 and 1, unless otherwise noted. Images are converted to pytorch tensors, but no normalization or other alterations where made unless otherwise specified. MNIST, Fashion-MNIST, and E-MNIST are image data sets with images sized 1x28x28. SVHN, CIFAR-10, and CIFAR-100 are natural image data sets with images sized 3x32x32. Tiny ImageNet is a natural image data set with images sized 3x64x64. Finally, CalTech256 is a natural image data set with images of various sizes. We cropped all CalTech256 images to 3x128x128. For all models, we use a grid searches to find hyper-parameters.

Unless otherwise specified, we use a recall threshold, $\gamma$, of .01. Prior works (e.g., \cite{salvatori2021associative, yoo2022bayespcn}) use smaller thresholds of .005 or .001. We use a slightly larger threshold here, since we find one of our main comparison models, MHN, is unable to recall any images while training with BP, and we wanted to show this was not simply a result of an arbitrarily small recall threshold. 

\subsection{SQHN Implementation Details}
All SQHN architectures are set up to have a tree structure. One can think of the structure as being similar to locally connected networks or convolutional networks without weight sharing. Thus, we can talk about SQHNs as having a certain number of channels at each hidden layer (equivalent to the number of neurons at each node) and the receptive field size of each channel/node. To simplify the architectures, we design SQHNs so that nodes have non-overlapping receptive fields, which means each node has one parent. More complex versions of SQHNs can be used for more complex vision tasks, but we found these simpler architectures performed very well and eased scaling in associative memory tasks. Here we explain how inference is implemented. Inference involves a single feed-forward and feedback sweep through the network. The goal of inference is to maximize energy, i.e. the sum of conditional probabilities of input and hidden node values (equation \ref{eq:energy}).

Let $h_{l,j}$ be the internal state value of the $jth$ neuron in the $l$th node. Let the $l$th node be in the first hidden layer, which has one child node. Let $M_l$ be the matrix from node $l$ to its child node, which is visible and clamped in image patch $x_{c_l}$. The matrix $M_l$ can be decomposed in to columns or 'memory vectors': $M_l = [m_{l,0}, m_{l,1},...,m_{l,J}]$. During the feed forward pass $h_{l,j}$ receives the signal
\begin{equation}\label{eq:inpLayerBot}
h_{l,j} = \frac{.5 (m_{l,j}^{\top} - .5) (x_{c_l} - .5)}{\Vert (m_{l,j}^{\top} - .5) \Vert \Vert (x_{c_l} - .5) \Vert} + .5.
\end{equation}
This operation is a kind of shifted cosine similarity operation between in input and each memory vector $m_{l,j}$. We show in supplementals this operation can be interpreted as a weighted, normalized average of the probability of each pixel value, under the assumption each pixels value is a binary variable (supp. \ref{supp:infer}). In matrix form this operation only involves a vector matrix multiply, like a neural network, with an extra shift operation and normalization operation:
\begin{equation}\label{eq:inpLayerBotMat}
h_{l} = \frac{1}{2 Z} (M_{l}^{\top} - .5) (x_{c_l} - .5) + .5,
\end{equation}
where $\frac{1}{Z}$ is the neuron-wise normalization. In practice, one could also store a separate feedback matrix with the shifted, transposed, and normalized version of matrix $M_l$.

Hidden nodes at the 2nd layer and higher have children nodes that represent discrete/categorical variables. Computing the sum of probabilities of each child therefore requires a different operation. Let $h_{c_n}$ be the vector of internal neuron values of the nth child node of node $l$, and let $h^{max}_{c_n} = max(h_{c_n})$, where $max$ is an activation function that return a vector of all zeros except for the maximum value, e.g., $max([.1, .8, .4]) = [0, .8, 0]$. Let $h^{max}_{c}$ be the concatenation of all $N$ children node max values: $h^{max}_{c} = [h^{max,\top}_{c_0}, h^{max,\top}_{c_1}, ...h^{max,\top}_{c_N}]^{\top}$, and let $M_l$ be the concatenation of the matrices lead from node $l$ to its children: $M_l = [M_{l,0}^{\top}, M_{l,1}^{\top},... M_{l,N}^{\top}]^{\top}$. The update for a hidden node is
\begin{equation}\label{eq:hidLayerBot}
h_{l} =  \frac{1}{Z} M_l^{\top} h_c^{max},
\end{equation}
where $Z = N \Vert h_c^{max} \Vert$. We show in the supplemental (section \ref{supp:infer}) this is equivalent to taking the weighted average of the conditional probability of child node assignments, where each conditional probability is weighted by the weighted average of the probability of its children values, which are a weighted probability of their children values, and so on. Using this weighting thus conveys information about the probabilities of all descendents.

After signals propagate to the root/memory node, signals are propagated down the tree, and one-hot values assigned to each node. Let $argmax$ be the activation function that assigns a one-hot to the maximum value, e.g., $argmax([.1, .8, .4]) = [0, 1, 0]$. Let $h^*_l$ be the one-hot assignment for the $l$th node. At the memory node the assignment is simply
\begin{equation}
h^*_L = argmax(h_L).
\end{equation}
Signals are then propagate down according to 
\begin{equation}
h^*_l = argmax(\lambda h_l + (1 - \lambda) M_{pa_l,l} h^*_{pa_l}),
\end{equation}
where $pa_l$ is the parent node of $l$ and $\lambda$ a hyper-parameter. For all recall tasks we set $\lambda = .5$.

\subsection{Associative Recall Comparison} For our initial comparison to SoTA associative memory models we tested our SQHN models on the same associative memory task as \cite{yoo2022bayespcn}. Each SQHN model had the same number of neurons at its nodes as there are images. SQHN L2 had kernal sizes 4x4 and 8x8 for CIFAR and tiny image net respectively at its bottom layer. It had kernal size 4x4 at second layer for both data sets. SQHN L3 had kernal sizes 2x2 and 4x4 for CIFAR and tiny image net respectively at its bottom layer. It had kernal size 4x4 at both its second and third layer for both data sets. In the moderate corruption task, 1024 images are presented. In the high corruption task, 128 are presented. The auto-associative task used white noise corruption with variance .2 and .8 for moderate and high corruption task, respectively. For hetero-associative tasks, $25\%$ and $75\%$ of pixels are masked for moderate and high tasks. Masked pixels are treated as missing in these tasks and are therefore ignored by the bottom hidden layer of SQHN (rather than being treated as 0 values, which affects the normalization term). MSE is only computed in hetero-associative tasks between the output and the original pixels that were missing in the input. 

Yoo et al. \cite{yoo2022bayespcn} tests the generative predictive coding network (GPCN) of \cite{salvatori2021associative}, which trains offline. Yoo et al. develops an version for online training, called BayesPCN, and BayesPCN with forgetting, which prevents learning from slowing too much. 

We also compared to MHN models. Following previous works (e.g., \cite{millidge2022universal, yoo2022bayespcn}, we update MHNs with a simple batch update where all the input vectors are stored in columns of a 'memory matrix' $M$ and we set the temperature, $\beta$ to 10000, essentially treating it as a argmax operator. The original model of \cite{ramsauer2020hopfield} performs retrieval via the operation 
\begin{equation}
x^{(new)} = M softmax(\beta M^{\top} \tilde{x}),
\end{equation}
where $\beta$ is the temperature. Millidge et al. \cite{millidge2022universal} showed this recall process is a kind of nearest neighbor operation, where a weighted average of memory vectors are returned, and memory vectors more similar to the input, according to a dot product measure, are given more weight. Recall is typically best when $\beta$ is large, and the nearest memory vector is returned. Millidge showed that other similarity measures work better than dot products. We show the best performing model of Millidge et al. which uses the Manhattan distance. Finally, we also report the results of the gradient-based MHN (MHN grad) listed in \cite{yoo2022bayespcn} that performs recall via a gradient-based inference procedure instead of the one-shot recall process shown above.  

An important caveat is that the results reported by \cite{yoo2022bayespcn} used images normalized to -1 and 1. Our SQHN model was designed for non-normalized images with values 0 to 1. To make the comparison fair, we multiplied the MSE scores for the SQHN by 4 since normalized images (ranging from -1 to 1) have an error range twice as large as unnormalized images (ranging from 0 to 1), which is then squared in the squared error: let $e = x - x^{new}$ be the unnormalized image and output error. If the image and output were normalized before computing the error we would be $2e$. If this error is then squared we get $4e^2$. Further, standard deviation must be multiplied by two.

\subsection{Online Continual Auto-Associative Memory Tests (Figure \ref{fig:onlineCont})} In the online-continual learning task, images were presented online (one at a time, and single pass through the data) in a non-i.i.d. fashion. In particular, images from the first task are presented online, then images from the second task are presented online, and so on. We tested grouping images by class (online class incremental or OCI) and by visually distinct data sets (online domain incremental or ODI). Each class and domain group had an equal number of images. In the ODI setting, the one layer models trained one four different data sets/domains: MNIST, MNIST with flipped pixels, FMNIST, and FMNIST with flipped pixels. One layer models were tested with 300, 1300, and 2300 on MNIST data sets, and 700, 2000, 3300 on CIFAR-100. The three layer models trained on CIFAR-10 with dark pixels ($x^{dark} = x * .5$), CIFAR-10 with bright flipped pixels ($x^{light} = (-1 * x + 1) * .5 + .5$), SVHN with dark pixels, and SVHN with bright flipped pixels. The three layer SQHN model had kernal sizes 4x4, 4x4, 2x2, at the first, second, and third hidden layers and was tested with node sizes 200, 600, 1000. During test time training images with noise variance .2 presented.

We compare to another auto-associative memory model known as the modern Hopfield network (MHN) \cite{krotov2016dense, ramsauer2020hopfield}. We compare to the MHN of \cite{ramsauer2020hopfield}, in particular, because it is the most similar architecturally to SQHN, and has a very high capacity relative to other MHNs \cite{ramsauer2020hopfield}. The one layer model performs recall as:
\begin{equation}
x_{new} = M softmax(\beta M^T x),
\end{equation}
where $x$ is the input vector and M with weight matrix, and $\beta$ a scalar temperature. This same mechanism can be stacked into tree-like hierarchies as well, similar to the multi-level SQHN. For the multi-level MHN we essential use the same architecture (same kernal sizes and node sizes) but replace the argmax with softmax and remove the normalization.

One issue we ran into was there was no standard way to train this model online for auto-associative tasks like the ones used here. The original model of \cite{ramsauer2020hopfield} was trained with BP, but not used for auto-associative memory. Instead, it was treated like the attention layer of a transformer, where $M$ is generated by a separate set of weights. Thus, we train the model like the original MHNs of \cite{krotov2016dense}, which were used for auto-association, where BP is used to directly optimize $M$. Since we are training to reduce recall MSE we directly optimize MHNs to reduce recall MSE. We test the model trained with plain SGD, SGD with Adam optimization \cite{kingma2014adam}, EWC++ \cite{chaudhry2018riemannian} which is an online version of EWC \cite{kirkpatrick2017overcoming}, and episodic replay with a small memory buffer \cite{chaudhry2019tiny}. EWC++ uses a moving average of the Fisher information matrix, $F$, to modulate parameter updates. Parameter updates for EWC are regularized loss gradients where the objective, $\tilde{L}$ is:
\begin{equation}
\tilde{L}^k = L^k + \lambda \sum_i F_{\theta_i^{k-1}} (\theta_i - \theta_i^{k-1})^2,
\end{equation}
where $\theta_i^{k-1}$ is the ith value of the parameters from task $k-1$ and $\lambda$ weight the regularization. In EWC++, the Fisher is computed as a moving average $F^t = \alpha F^t + (1 - \alpha) F^{t-1}$, where $t$ is the current training iteration, and a moving average of parameters is kept as well. At the end of each task, $F_{\theta_i^{k-1}}$ is set equal to $F^t$ and the moving average of parameters set equal to $\theta_{k-1}$. One issue is that EWC++ needs to know task boundaries, so it can store the parameters and Fisher matrix from previous task. In our task, no such supervision is allowed. To deal with this, we just treat each data point as its own task and compute a moving average of the parameters, same as the fisher information matrix. This should still allow parameters and Fishers to maintain information about previous tasks. The $\lambda$ and $\alpha$ hyper-params are found via grid search. Generally, we find the regularization provides little to no benefit, like because the training runs are short and therefore training speed is as important as preventing forgetting. EWC slowed down training speed too when $\lambda$ was significant, so the regularization provided little help.

Episodic replay with a tiny memory buffer \cite{chaudhry2019tiny} stores a small sample of previously observed data-points, then uses SGD to update parameters each iteration using the whole sample as a mini-batch. Despite using a small buffer this approach has shown to be highly effectively in continual classification. Methods that sample a mini-batch from a larger buffer can work better \cite{mai2022online}. However, we are interested in memory and compute efficient algorithms and are using relatively small data sets. Thus, we use the tiny memory buffer method. There are several algorithms for deciding when to store a data point in the buffer. We use the resevoir sampling method (see \cite{chaudhry2019tiny}), which is simple method that has shown to consistently be better than other methods on classification tasks. Resevoir method is simple: let $n$ be the maximum number of data point that can be stored in the buffer, and $t$ be the total number of data points observed. For the data point, $x^t$, at each iteration $t$, check if the buffer is full. If it is not full, add the data point to the buffer. If the buffer is full, randomly overwrite one data point in the buffer with $x^t$, with probability $\frac{n}{t+1}$.

\subsection{Noisy Encoding} We compared one and three hidden layer models on EMNIST and CIFAR-100, respectively, in the noisy encoding task. The single layer models were tested with node size 300, and were presented with 300 images. Trees had node size 150 and were presented with 150 images. Therefore, any inability to recall images must be due to an inability to remove noise during learning, rather than capacity limitations. Images were either Gaussian samples (white noise added then clamped to range 0 to 1) or binary samples, were drawn from the original image values. For each image, some number of samples were drawn and presented to the models online in a sequence, before moving to the next image. For EMNIST images, we test 1, 5, 20, and 50 samples. For CIFAR-100 we tested 1, 10, 20 samples. At no point during training was the original, non-corrupted image presented to the model. At test time, the original, non-corrupted image was presented and the model had to reconstruct. We also tested an SQHN model with a slight alteration (SQHN+), where after the hidden states are computed for the first image sample the hidden states are held fixed for the remainder of the samples. This ensured that samples from the same image were encoded to the same latent state. Although SQHN was able to do this well on its own, it sometimes mapped samples from the same image to different latent state in the high noise, binary sample scenario. In these cases, SQHN+ performed better. 

\subsection{Episodic Recognition} 
The the episodic comparison between SQHN and MHN we use a one layer model and MNIST data sets. The train set is pulled from the MNIST training data. The in-distribution hold out set is from MNIST test data. The out of distribution set is from F-MNIST. We test two methods for performing recognition in an MHN with one hidden layer. The first method uses the activities at the hidden layer as a measure of familiarity/similarity. If that value is above a threshold, $\rho$, then the model judges it has observed the data point. The second method keeps a moving average, $\mu$, of the recall MSE during training, and uses a scalar multiple of this average, $\rho \mu$, as the decision threshold. All hyper-parameters found with grid search. Learning rates updated to achieve best performance in terms of recall MSE. $\rho$ updated to achieve best recognition accuracy. We test SQHN and MHN networks with 300 neurons. 3000 data points from EMNIST are presented in an online and i.d.d. fashion. At each test point all previously observed train data are presented along wth an equal number of in-distribution and out-of-distribution data. We plot the MSEs of MHN and SQHN for each dataset to show that MHN generalize so well that they have no performance different between training and in-distribution data, making it impossible for the model to perform recognition. 

\subsection{SQHN Architecture Compare (Figures \ref{fig:archCompare} B,D,E)}

\textbf{Auto-association with Varying Amounts of Corruption (Figure \ref{fig:archCompare} B)} All SQHN models had 1000 neurons at each node and are trained to memorize 1000 images. SQHNs are trained so that each image is encoded in a unique set of one hot vectors at hidden nodes, (which can be done in practice by setting $\alpha$ to a very large number). This means that all SQHN models are not over capacity, and any inability to recall images is due only to their inability to handle varying amounts or types of corruption. Importantly, only auto-associative, and not hetero-associative, recall is tested. The masks added to images are treated as corrupted pixels (i.e., their values are taken as input) rather than missing pixels. For noise tasks we add white noise to the images then clamp the images to values between 0 and 1. Noise variances tested are $[0, .05, .15, .25, .4, .5, .75, 1, 1.25, 1.5]$. Fractions masked that are tested are $[0,.1, .25, .5, .75, .875, .9375]$. For masking tasks a rectangle with random length and width (less than or equal to the width and height of image) is sampled and its position within the image is randomly sampled. The black mask sets pixels equal to 0. The color mask randomly selects an RGB value from a uniform distribution, and sets pixels in the mask equal to that value. Noise mask sets mask values equal to a white noise sample (variance 1), clamped to values between 0 and 1. The one level SQHN-L1 model has input layer kernal size equal to the size of the image. SQHN-L1 essentially performs nearest neighbor operations comparing each images to the set of training images, which stored in columns of its weight matrix, using mean-shifted cosine similarity. SQHN-L2 has an input layer kernal size of 8x8, 16x16, and 32x32 for CIFAR-10, Tiny Imagenet and Caltech 256, respectively. The second layer kernal across all data sets is sized 4x4. SQHN-L3 has an input layer kernal size of 4x4, 8x8, and 16x16 for CIFAR-10, Tiny Imagenet and Caltech 256, respectively. The second and third layer kernal sizes across all data sets are sized 4x4. 

\textbf{Online Auto-associative Learning (Figure \ref{fig:archCompare} D)} For this task a one, two, and three hidden layer were model were trained in the online i.i.d. scenario on CIFAR-100. Models were trained with 200, 600, 1000 neurons at hidden nodes. SQHN L1 model had input layer kernal size equal to the dimension of the image (32x32). SQHN L2 had input layer kernal size 4x4 (which worked better than the 8x8 kernal used in the experiment above), and second layer kernal size of 8x8. SQHN-L3 has an input layer kernal size of 4x4, 8x8, and 16x16 for CIFAR-10, Tiny Imagenet and Caltech 256, respectively. The second and third layer kernal sizes across all data sets are sized 4x4. $\alpha$ was set very large so networks memorized first J data points, up until capacity reached. We measured recall accuracy without any corruption, and noticed L2 and L3 architectures performed much better than L1. We suspected this was because the L2 and L3 architectures learn representation of small features that generalized widely across training set. To test this we also measured test MSE on the test set from CIFAR-100.

\textbf{SQHN Episodic Recognition Comparison (Figure \ref{fig:archCompare} E)} For the recognition task, a train set, in-distribution hold-out set, and out of distribution set are needed. We use CIFAR-10 training images for the train set, a hold-out/test set of CIFAR-10 images as the in-distribution set, and CIFAR-100 with flipped pixels as out of distribution set (i.e., each image is multiple by -1 then 1 is added). SQHN L1 is given 500 neurons at its hidden node. SQHN L2 and L3 are given 500 neurons at their memory node. Since only the memory node is used directly for recognition, we pre-train the lower layers of SQHN L2 and L3 for 1000 iterations on CIFAR-10 images. SQHN L2 has kernal sizes 4x4 and 8x8 at hidden layers, and channel size of 40 at the first hidden layer. SQHN L3 has kernal sizes 2x2, 4x4, 4x4 at the first, second, and third hidden layers, respectively and channel sizes of 40 and 200 at the first and second hidden layers. After pre-training during the training phase, images from the train set are presented online in and i.i.d. manner. Only the weights leading into the memory node are updated. During testing, models are presented with a set of images, which are composed of all of the training images observed so far, an equal number of in-distribution images, and an equal number of out-of-distribution images. The best guessing strategy is to guess all data points are new which yields $66\%$ accuracy.

\newpage
\section{Supplementary Material}

\subsection{Related Works}
In this subsection we do a more thorough comparison of SQHN to similar models and methods.

\textbf{Bio-Inspired Deep, Sparse Networks} The model most similar to SQHN is likely the Recursive Cortical networks (RCN) \cite{george2017generative}, which is another neural network/Bayesian network hybrid that uses an algorithm akin to MAP learning in a tree-structured architecture. SQHNs use distinct learning rules than RCNs. RCNs for example do not perform the averaging operation that SQHNs do and as a result RCNs cannot learn under noise (e.g., see supplemental from \cite{george2017generative}). RCNs also have more complex architectures that utilize max-pools and factorize color and shape representations. RCNs also do not explicitly minimize an energy function. Further, as far as we know RCNs have not been tested on natural images, or on associative memory tasks from recent machine learning literature. Hierarchical temporal memory \cite{ahmad2015properties} models (HTMs) are another deep neural network model, which, like SQHN, use sparse coding and have similarities to Bayesian networks with tree-like architectures. However, HTMs unlike SQHNs as far as we know, do not explicitly try to implement a discrete graphical model. They also use a different inference procedure, based on thresholding activations rather than one-hots, and a distinct learning algorithm that was not designed to optimize any energy function. Finally, sparse Hopfield-like networks, like those of \cite{o2017deep, rozell2008sparse}, also have some architectural similarities to SQHN too, but these models do not explicitly encode discrete random variables, they use distinct energy functions, and they have not been applied to memory tasks. 

\textbf{Online-Continual Learning} Following previous reviews \cite{parisi2019continual, khetarpal2022towards, wang2023comprehensive, parisi2020online, gallardo2021self, mai2022online, hayes2022online}, continual and online-continual learning approaches may be split into several types: 1) Regularization based approaches constrain the way parameters are update to reduce catastrophic forgetting (e.g., \cite{kirkpatrick2017overcoming, ritter2018online, zenke2017continual, aljundi2018memory, li2017learning}). 2) Memory-based approaches store or model previously observed data for the purpose of replaying the data during training (e.g., \cite{chaudhry2019tiny, shin2017continual, aljundi2019gradient}). 3) Parameter isolation models avoid forgetting by allocating different parameters for each task, either by gating components or by dynamically adding new sub-networks as needed (e.g., \cite{lee2020neural, yoon2017lifelong, mallya2018packnet, mallya2018piggyback}). Our SQHN model differs from these approaches since it uses local learning rules, while these previous approaches us BP in some form (e.g., all the methods reviewed by \cite{parisi2020online, mai2022online} use BP either in pre-training and/or during online training of the classifier). Our SQHN model can be described as using a kind of parameter isolation (via sparse neuron activity and neuron growth) combined with regularization (via learn rate decay schedules) to avoid forgetting, but as far as we can tell SQHNs particular strategy is novel. For example, whereas all the previous works on regularization and isolation use some form of \textit{global} gradient information to regularize and isolate parameters, the SQHN does so based solely on information \textit{local} to the neuron or synapse.

\textbf{Associative Memory} Like the SQHN, classic Hopfield networks \cite{hopfield1982neural}, modern Hopfield networks (MHN) \cite{krotov2016dense, ramsauer2020hopfield}, and predictive coding (PC) networks \cite{rao1999predictive} are all energy based models that can perform auto-associative tasks. SQHN, however, is the only model to explicitly implement discrete graphical model and the SQHN optimizes a novel energy function. Further, unlike the SQHN none of these models (with maybe the exception of \cite{yoo2022bayespcn} which we compare to SQHN below), were designed specifically for online-continual learning.

In sum, although our SQHN model combines some elements from existing model, it utilizing novel, bio-plausible, backprop-free strategies for encoding memories for associative memory tasks. Further, it should be emphasized unlike previous works we are training networks on a largely novel unsupervised noisy encoding memory tasks and recognition tasks, which as far as we can find, have not been directly explored in the machine learning literature.

\subsection{Derivation of Energy}\label{supp:energy}
Consider a directed tree-structured acyclic graph (DAG) where each hidden node represents a discrete random variable. The joint probability of the nodes values, given the learned parameters, is the product of the conditional probabilities of each node given the values of its parent nodes:
\begin{equation}
p(h^*_0, h^*_1,...h^*_L) = \prod_{l=0}^L p(h^*_l| pa^t_l),
\end{equation}
where $pa_l$ refers to the values of the parents of nodes $l$ and $h^*_l$ is the integer value assigned to node $l$. We derive our novel energy (equation \ref{eq:energy}) function by performing inference using not just the learned parameters, but also taking into account uncertainty over the parameters using a prior probability distribution. As we explain, adding this prior allows us to treat learning as Bayesian inference, which is highly useful for online learning.

Let $M_{pa_l,l}$ be the matrix of learned conditional probabilities over node $l$ given its parent. The values of the parent nodes are discrete integer values represented, in SQHN networks, by the one-hot $h^*_{pa_l}$. The conditional distribution according to the learned matrix is $p_l = M_{pa_l,l} h^*_{pa_l}$, which is equivalent to the column of $M_{pa_l}$ indexed by $h^*_{pa_l}$. However, $p_l$ does not take into account the uncertainty over our parameters. Accounting for uncertainty is important especially early in online training, since early in training parameters have been updated using only a small fraction of the data set, and therefore these parameters are less 'trustworthy' than 

A common method to represent uncertainty over parameters is to treat parameters as a random variable and place a prior distribution over it. Learning then amounts to performing Bayesian inference, where the maximum likelihood (learned) parameters are combined with the prior distribution. The typical prior distribution over parameters for discrete graphical models, like Bayesian networks, are Dirichlet priors, which are the conjugate prior for discrete (e.g., categorical and multinomial) distributions. Details of about this prior can be found in \cite{heckerman1998tutorial}. Here we simply point out that the most common Dirichlet prior is the uniform distribution. If node $l$ is a discrete distribution then the distribution the represents no prior knowledge is the uniform distribution. 

There are several ways to place this prior over parameters. First, a prior distribution may be place over each individual column of $M_{pa_l,l}$, where each column represents the conditional distribution for a different parent node value. In this case, uncertainty about the distribution encoded in each column of $M_{pa_l,l}$ is represented separately, and may differ between columns. We use a simpler approach, which is to have one measure of uncertainty over $M_{pa_l,l}$ as a whole. Let's call $\theta^0$ the parameters that set each column of each matrix equal to the uniform distribution. In particular, if we assume that the parent of $l$ is assigned value $j$, we compute the conditional probability distribution as 
\begin{equation}
p(h^{*}_l| pa_l, \theta^0, \alpha) = \frac{\alpha \frac{1}{J_l} + t p(h^*_l | pa_l)}{t + \alpha} = \frac{\epsilon}{J_l} + (1 - \epsilon ) p(h^*_l | pa_l),
\end{equation}
where $t$ is the total number of data points observed so far and $\epsilon = \frac{\alpha}{t + \alpha}$. The value $J_l$ is the number of values node $l$ can take and thus $\frac{1}{J_l}$ represents the prior (uniform) probability over child node values. The probability $p(h^*_l | pa_l)$ is the learned conditional probability of $h^*_l$ generated by $M_{pa_l, l}$. The conditional probability at iteration $t$ then is a weighted average between the uniform distribution and the learned distribution generated by $M_{pa_l, l}$. The learned distribution is weighted increasingly heavily as the number of data points observed, $t$, increases. The weighting also depends on the hyper-parameter $\alpha$: the larger $\alpha$ is the more heavily the uniform distribution is weighted and the slower its influence will decay.

Taking the joint of these conditionals we get
\begin{equation}
p(h^*_0, h^*_1,...,h^*_L) = \prod_{l=0}^L p(h^{*}_l| pa_l, \theta^0, \alpha) = \prod_{l=0}^L (\frac{\epsilon}{J_l} + (1-\epsilon) p(h^*_l | pa_l)).
\end{equation}
We derive an energy from this expression by approximating the maximization of the joint joint distribution. It is approximated by expanding the product and removing certain terms that are guaranteed to be small relative to other terms and by removing constants which do not affect the local maximum of the joint. To simplify notation, without loss of generality, assume all nodes have the same number of possible values $J$. Further, let $g = \frac{\epsilon}{J}$ and $p(h^*_l | pa_l) = p_l$ and $L$ be the number of nodes. Now if we expand the equation above we get a combination of sums and products of probabilities:
\begin{equation}
\begin{split}
\prod_{l=0}^L (g + (1-\epsilon) p_l) =  g^L &+ g^{L-1} (1 - \epsilon) (\sum_{l=1}^L p_l)\\
&+ \sum_{l=2}^L g^{L-l}(1-\epsilon)^l (\prod_{k=1}^{l} p_k) + G\\
=  c_0 + &c_1 \sum_{l=1}^L p_l + \sum_{l=2}^L c_2^l \prod_{k=1}^{l} p_k + G\\
\end{split}
\end{equation}
where $c_0$, $c_1$, $c_2$ are scalar coefficients, and G is a placeholder for a large number of similar terms we were unable to express concisely. These terms are similar in the sense they all involve a scalar $x$, a sum of some number k probabilities $y = \sum^k_{l=1} p_l$ multiplied by by a product of probabilities $z = \sum^{L-n}_{k} p_k$, where $n$ is some number less than $L$, having the form $xyz$.

So we have four terms. A constant scalar $c_0$, a sum of the conditional probabilities $c_1 \sum_{l=0}^L p_l$, a product of probabilities, and a large sum of terms that multiply a sum of probabilities by a product of probabilities. Let's collapse $G$ and the product term into the term $S$. In the limit where $S \rightarrow 0$ we have
\begin{equation}
\begin{split}
\argmax_{h, \theta} p(h^*_0, h^*_1,...,h^*_L) &= \lim_{S\to0}  \argmax_{h, \theta} c_0 + c_1 \sum_{l=0}^L p_l + S\\
&= \argmax_{h, \theta} c_0 + c_1 \sum_{l=0}^L p_l \\
&= \argmax_{h, \theta} \sum_{l=0}^L p_l\\
&= \argmax_{h, \theta} E.
\end{split}
\end{equation}
Thus, under this limit $E \approx p(h^*_0, h^*_1,...,h^*_L)$. 

What does it take for this limit to be well approximated in practice? Each in $S$ has form $xyz$, where $x$ is a non-zero scalar, $y$ will typically be non-zero since it is a sum of probabilities, and z is a product of probabilities, which will be zero when at least one term in the product is zeroed out. The more terms in the product $z$ the more likely the term will be zero, and the less terms the less likely it will be zero (assuming roughly equal probability any one term will be zero). When $z$ has few terms, $y$ is a sum of many probabilities (see above). Thus, those terms in $S$ which have a product $z$ of many probabilities and a sum $y$ of few will be zeroed out more often on average than terms with products of a small number of probabilities and sums of many. This suggests in networks where it is often the case that multiple conditional probabilities are zero each iteration, a sum of probabilities is a good approximation of the joint

SQHNs generally have highly sparse weights, and therefore it will often be the case that multiple conditional probabilities will be zero during learning. For example, if a node $l$ grows a new neuron at the current iteration, its conditional probability according to the parameters at the current iteration will be zero. If node $l$ takes a value that was not previously observed in combination with its sibling node values, it (or at least one of its siblings) will have probability 0. \textit{This suggests that maximizing our energy function in SQHNs is a decent approximation to maximizing the joint with the Dirichlet prior over parameters as described above.}

\subsection{Derivation of Inference Procedure}\label{supp:infer}

MAP inference in tree-structured graphs with discrete nodes can be implemented via the max-product algorithm \cite{bishop2006pattern}, whose goal can be described as:
\begin{equation}
\begin{split}
\textbf{Max-Product: } &\argmax_{h^*} \prod_{h^*_l} p(h^{*}_l|pa_l^t),\\ 
\end{split}
\end{equation}
where $h^*$ is the set of integer value assignments for each node. The max-product algorithm works by performing a single FF and FB sweep through the network. Consider a tree structured graph where, without loss of generality, we assume the simple case where nodes take one of two value. In this case, the conditional probability of node $l$ given its parent node are represented by matrix $M_{l,pa_l} = [m_0, m_1]$. The max-product algorithm first propagates a signal from the visible nodes up through hidden nodes to the root node using the operation (in our notation) $h_{pa_l} = [\max(m_0 \otimes h_{l}), \max(m_1 \otimes h_{l})]$ \cite{bishop2006pattern}, where $\max$ (not to be confused with the activation function $max$) outputs a single scalar. In the case of multiple children nodes inputs to $h_l$ are multiplied element-wise. The root node value is set equal to this maximum value, $h^*_L = \argmax h_L$, then an operation known as backtracking is performed, where a signal is propagated back from the root node down the tree to find the MAP values for hidden nodes (see \cite{bishop2006pattern} for details). An analogous algorithm may be used to maximize the SQHN energy $E$, which is a sum rather than product of conditional probabilities. The analogous operation is $h_2 = [\max(m_0 + h_1), \max(m_1 + h_1)]$. These operations may be used in the same procedure as the max-product to obtain the max $E$ value at the root node, MAP hidden node values via a backtracking procedure.

The issue with this approach is that the main operation used in the FF sweep is not a standard vector matrix multiply. Instead of multiplying each row/memory vector of $M_2^{\top}$ element-wise by $h_1$ then summing over elements, as in a matrix multiply, each row and $h_1$ are multiplied element-wise then the max value is returned. However, our goal is to create a neural network, that may be easily implemented in hardware that assumes vector matrix multiplies as the main operation (e.g., GPUs and memoristor based neuromorphic hardware). Therefore, we propose the following alternative approximate MAP inference procedure, which only uses vector matrix multiplies and a neuron-wise normalization operation. Like the MAP inference procedures described above, this procedure involves a feed forward (FF) and feedback sweep through the network. During the FF sweep, each node, starting at the lowest layer and working up, updates according to:
\begin{equation}\label{eq:suppInptHid}
h_l = \frac{1}{Z}\sum^C_{c \in ch(h_l)}M_{l,c}^{\top} max(h_c),
\end{equation}
where $ch(l)$ is the set of nodes that are children of $h_l$ and $M_{l,c}$ is the matrix from node $l$ to child node $c$ and the italicized $max$ operation to express an activation function that output a vector of all zeros, except for the max element: e.g., $max([.3, .8, .4]) = [0, .8, 0]$. The summed input is normalized by $Z = \frac{1}{C \sqrt{\sum_c \max(h_c)^2}}$.

We can also describe the activation at $l$'s child node as $max(h_c) = argmax(h_c) \max(h_c) = h^*_c \max(h_c)$, where $argmax$ is the activation that returns a one-hot, and the un-italized $\max$ returns a scalar, e.g., $\max([.3, .8, .4]) = .8$. Further, for each each neuron $j$ in node $l$ it is the case that $M_{l,c,j}^{\top} h^*_c = p(h^*_c | h_l=j)$. This means we can express the input to each neuron $j$ in node $l$ as 
\begin{equation}\label{eq:wgtAvgChild}
h_{l,j} = \frac{1}{Z}\sum^C_{c \in ch(h_l)} p(h^*_c | h_l=j) \max(h_c).
\end{equation}
Thus, the input to each neuron $j$, during the FF sweep is a sum of the probabilities of child nodes, given node $l$ has value $j$, where each probability is weighted by the associated max value at the child node. Therefore, $h_{l,j}$ is the \textit{weighted energy of $l$'s children node values}, when $l$ takes value $j$. The term $Z$ then takes the average, so that $h_{l,j}$ is a weighted average of the probabilities of child nodes given node $l$ has value $j$. \textit{Thus, the FF sweep sets nodes to the value that maximizes the weighted average of the probability (i.e., weight energy) of child node values.}.

\begin{wrapfigure}{r}{0.5\textwidth}
  \begin{center}
  \includegraphics[width=0.49\textwidth]{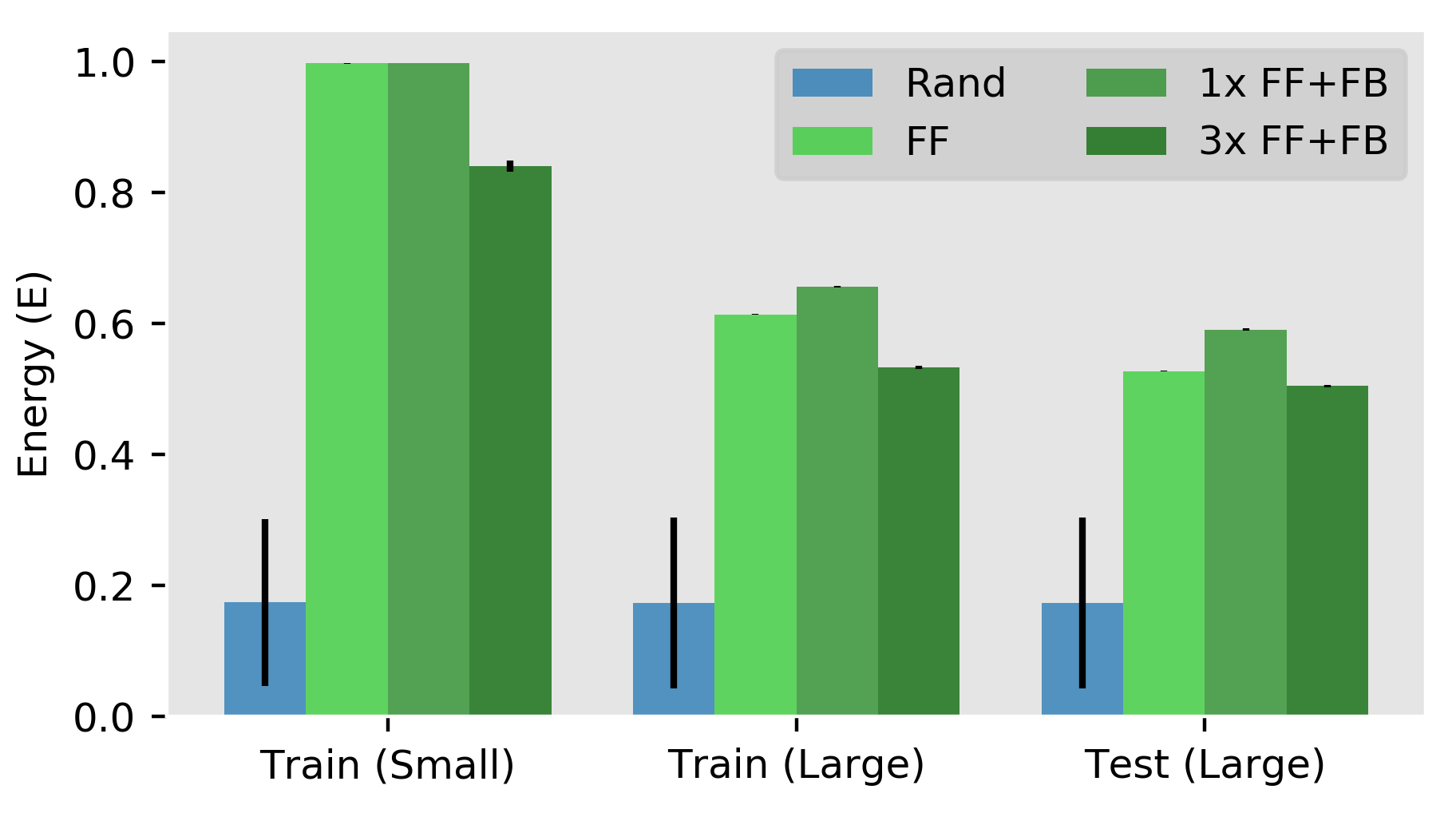}
  \end{center}
  \caption{Energy measurements during SQHN inference procedure. Energy after a single feed forward (FF) sweep, one FF and feedback sweep (1x FF+FB), and three FF FB sweep (3x FF+FB). Energy shown on small train set (200 images), and on large train set (1000 images) and test/hold-out data set. One FB sweep increases energy, but the network performs best only after a single FF and FB sweep.}
\end{wrapfigure}\label{fig:energy}

Importantly, the weighting terms carry information about the energy of descendent nodes: the weighting term $\max(h_c)$ can be defined recursively via equation \ref{eq:wgtAvgChild}, which shows that $h_c$ is itself a weighted average of its children node probabilities (i.e., grandchildren of $l$), and so on. \textit{Therefore, weighting terms act as a kind of bottom up attention mechanism, where inputs from children are weighted based on the weighted energy/probability of grandchildren, whose weight are based on the energy/probability of great-grandchildren, and so on.} 

Nodes at the first hidden layer, on the other hand, receive a signal from visible nodes with no children. It turns out this signal can still be interpreted as the a weighted average of the likelihood of each pixel value. The input to a node at the first hidden layer is
\begin{equation}\label{eq:suppInptBot}
h_{l,j} = \frac{.5 (m_{l,j}^{\top} - .5) (x_{c_l} - .5)}{\Vert (m_{l,j}^{\top} - .5) \Vert \Vert (x_{c_l} - .5) \Vert} + .5,
\end{equation}
where $x_{c_l}$ is a vector of input value (e.g., from an image patch). The input we use has values between 0 and 1. We treat each value as a binary variable whose probability is 
\begin{equation}\label{eq:llhood}
\begin{split}
p(x_{c_l,i} | h^*_l) &= x_{c_l,i} p_{c_l,i} + (1 - x_{c_l,i}) (1 - p_{c_l,i})\\
&= 2 x_{c_l,i} p_{c_l,i} - x_{c_l,i} - p_{c_l,i} + 1
\end{split}
\end{equation}
Next, consider that the prediction $p_{c_l}$ just equals the memory vector $m_{l,j}$ indexed by $h^*_l$. Given this, we can rewrite the numerator of equation \ref{eq:suppInptBot} as
\begin{equation}
\begin{split}
 (m_{l,j,i} - .5) (x_{c_l,i} - .5) &= (p_{c_l,i} - .5) (x_{c_l,i} - .5) \\ 
 &= p_{c_l,i} x_{c_l,i} - .5 p_{c_l,i} - .5 x_{c_l,i} + .25 \\ 
 &= .5 p(x_{c_l,i} | h^*_l) - .25,\\ 
\end{split}
\end{equation}
which is just the liklihood of the input with an affine shift applied.
Thus, the numerator in equation \ref{eq:suppInptBot} computes an affine shifted version of the likelihood of each pixel and sums these together. The denominator ensures the output is between -1 and 1, which is rescaled to the range 0 and 1 by the multiplication and shift of .5. The advantage of computing the likelihood this way, rather than a direct computation of equation \ref{eq:llhood} is that this operation can be implemented via a neural network like matrix vector multiply with neuron-wise normalization:
\begin{equation}
h_{l} = \frac{1}{2 Z} (M_l^{\top} - .5)(x_{c_l} - .5) + .5,
\end{equation}
where $Z$ is a vector of the terms computed accord to the denominator in equation \ref{eq:suppInptBot}. It is also possible to just store a separate feedback matrix, with shifted and normalized rows.

Finally, after activities $h_l$ are computed via the FF sweep, the final $h_l$ values are computed using the top-down/FB sweep. Let $h_{l+n}$ be the parent of $h_l$
\begin{equation}\label{eq:post}
h_l = (\lambda) h_l + (1 - \lambda) p_l,
\end{equation}
where $p_l = M_{pa_l,l} h^*_{pa_l}$ and $\lambda$ is a scalar between 0 and 1 that modulates the influence of the top-down signal, which we typically set to $.5$ during recall.

\begin{algorithm}[H]
\SetAlgoLined
\DontPrintSemicolon
\Begin{
    \For{$t=1$ \KwTo $T$}{
        \tcp{Clamp visible nodes to $x^t$}
        \tcp{Inference}
        \For{$l=0$ \KwTo $L$}{
            \tcp{Compute bottom up input $h_l$, equation \ref{eq:suppInptBot}, \ref{eq:suppInptHid}}
            }
        \For{reversed($l=0$ \KwTo $L-1$)}{
            \tcp{Combine $h_l$ and top-down input, equation \ref{eq:post}}
            \tcp{Set $h^*_l = argmax(h_l)$}}
        \tcp{Set each input patch $x_{c_l} = M h^*_l$}
    }
}
\caption{SQHN Recall Algorithm \label{alg:recall}}
\end{algorithm}

\subsection{Neuron Growth and the Dirichlet Process Prior}
The inference process described above assumes a fixed number of neurons at each node. It is assumed neuron numbers are fixed during recall, when we are performing inference over \textit{old} data points, from the training set. However, intuitively, it may be desirable to add new neurons and synapses dynamically as needed during learning, when the model performs inference over \textit{new} data points. A common principled way to dynamically add new components to a discrete/categorical distribution is via the Dirichlet process prior. In parametric models with discrete variables (e.g., standard mixture models), the number of components $J$ is treated as a hyper-parameter, which set by the modeler. In non-parameteric models, $J$ is learned/inferred by the model according to a prior distribution over $J$. The Dirichlet process prior (DPP) is a common prior for discrete non-parameteric models. We do not go into details on the derivation of the DPP prior here. We only explain how it inspires our model. For good tutorials see \cite{gershman2012tutorial, li2019tutorial}.

The DPP provides a method for determining when it is more probable that some input belongs to a new latent, integer value ($J+1$), rather than an already existing value ($\leq J$). It does this by representing the prior probability that the input belongs to a new integer value, given all previous assignments: $p(h_{l}^{*,t}=new | h_{l}^{*,0:t-1})$, which is the prior probability the input to node $l$ at iteration $t$ belongs to a new value given all of the previous value assignments up until current time $t$. The DPP sets this prior probability as
\begin{equation}
\begin{split}
p(h_{l}^{*,t}=new | h_{l}^{*,0:t-1}, \alpha) &= \frac{\alpha}{t + 1 + \alpha}\\
\end{split}
\end{equation}
where $\alpha$ is a hyper-parameter. The DPP is order invariant \cite{gershman2012tutorial}, which means that the ordering the previous data points arrive does not affect the computation of the prior. The posterior probability that the node value is new, multiplies the DPP above with the likelihood of the input according to some prior distribution over parameters \cite{li2019tutorial}. This posterior can then be compared to the posterior probabilities of existing values given a data point, $x$, and existing parameters. One can then make the decision to add a new value, if this is the most probable case.

Since the SQHN maximizes an energy function rather than the actual joint distribution and performs approximate inference rather than exact MAP inferance over the energy, our algorithm for neuron growth is inspired by the DPP rather than an exact implementation of it. In particular, we use the following simple computation as an estimate of the energy value associated with adding a new neuron:
\begin{equation}\label{eq:suppDir}
\epsilon = \frac{\gamma \alpha}{t + 1 + \alpha},
\end{equation}
where $\alpha$ and $\gamma$ are hyper-parameters and $0 < \gamma \leq 1$ and $0 < \alpha$. One can think about $\gamma$ as analogous to an estimate of the likelihood the input belongs to a new value (according to some prior distribution) and $\frac{\alpha}{t + 1 + \alpha}$ as analogous to the prior probability of the the input belonging to a new node value/neuron. During learning the SQHN performs inference before updating weights. If the maximum internal state value (energy), $\text{max}(h_l)$, at node $l$ is less than $\epsilon$, a new neuron is grown. Otherwise, the node is set to the maximum existing value (see next section).

\subsection{Derivation of Learning Algorithm}\label{supp:learn}
Under the SQHN algorithm, parameters are updated each iteration to solve the following optimization problem:
\begin{equation}
\begin{split}
\theta^t &= \argmax_\theta \sum^t_{i=0} E(\theta, h^{*,i}, x^i)\\
&\text{s.t.} \sum_j p_{l,j} = 1 \text{ } \forall l^{hid} \text{  and } p_{l,j} \geq 0 \text{ }  \forall l,j\\
\end{split}
\end{equation}
where $h^{*,i}$ are one-hot value assignments for each node associated with data point $x^i$, $t$ is the current training iteration, $x^i$ is the data point presented at iteration $i$, and $l^{hid}$ refers to hidden nodes. The weight update at iteration $t$ sets parameters equal to the values that maximize the energy over the current and all previously observed data points, and associative hidden states, under the constraint the prediction $p_l$ at hidden nodes are properly normalized. Importantly, since this update is computed online, we assume we have to perform this update only provided the data point and hidden states at the current iteration (i.e., we are assuming there is no buffer storing previous data and hidden states). This leaves two questions: 1) How do we solve this constrained optimization problem, in the online setting, and 2) how should the values of $h^*$ be set each iteration?

The same optimization above has been solved for the Bayesian networks \cite{heckerman1998tutorial}. Bayesian networks update parameters to maximize the \textit{product} of conditional probabilities. SQHN networks update parameters to maximize the \textit{sum} of conditional probabilities. Despite these differences, the goal ends up being equivalent w.r.t. to parameter updates: each matrix representing the conditional probabilities between parents and children must be updated to maximize the conditional probabilities of the observed child values (under normalization constraints). Thus, SQHN networks can utilize the solution for Bayesian networks as a basis for its online learning rule.

Consider the case where child node, $l$, is a discrete/categorical variable. The matrix containing the conditional probabilities between the parent of $l$, $pa_l$ and $l$ is $M_{l, pa_l}$. The columns of this matrix contain the conditional probability distributions given various values of the parent node: $m_{l, pa_l} = [m_{l,0}, m_{l,1}...,m_{l,J}]$, where $m_{l,j}$ is the conditional probability distribution over values of $l$, given $pa_l = j$. Each value of $m_{l,j}$ tells us the probability that $l$ takes a particular value given the $pa_l = j$. In particular, $p(l=k | pa_l=j) = m_{l,j,k}$, where $k$ is the kth value of $m_{l,j}$. The update for Bayesian networks tells us that this value is simply the number of times $j$ and $k$ co-occurred previously, divided by the total number of times parent to value $j$ occurred \cite{heckerman1998tutorial}:
\begin{equation}
p(l=k | pa_l=j) = m_{l,j,k} = \frac{c_{l,j,k}}{c_{l,j}},
\end{equation}
where $c_{l,j,k}$ is the number of times $pa_l=j$ and $l=k$ at the same iteration. The term $c_{l,j}$ is the number of times $pa_l=j$. This solution entails the matrix column $m_{l,j}$ is an average over the one-hot values of the child node that were present when $pa_l=j$:
\begin{equation}
m_{l,j}^t = \frac{1}{c_{l,j}^t} \sum_{i\in t_j} h^{*,i}_l,
\end{equation}
where $t_j$ refers to the set of time steps where $pa_l = j$. This same average can be computed online:
\begin{equation}
m_{l,j}^t = m_{l,j}^{t-1} + \frac{1}{c_{l,j}^t} (h^{*,t}_{l} - m_{l,j}^{t-1}),
\end{equation}
under the assumption $pa_l = j$. This same update can more generally be implemented as a local, Hebbian-like learning rule:
\begin{equation}\label{eq:MUpdate}
\begin{split}
M_l^t &= M_l^{t-1} + \frac{1}{c_{l}^{t,\top} h^*_{pa_l}} (h^{*,t}_{l} - p_{l}^{t-1}) h_{pa_l}^{*,t \top}\\
\end{split}
\end{equation}
where the update uses the fact that $p_l = M_l h^*_{pa_l}$ which trivially entails that $p_l$ equals the memory vector indexed by $h^*_{pa_l}$, e.g., if $pa_l=j$ then $p_l = m_{l, j}$. Second, the step size $\frac{1}{c_{l}^{t,\top} h^*_{pa_l}}$ uses the fact that $h^*_{pa_l}$ indexes the correct count, e.g. if $pa_l=j$ then $c_{l}^{t,\top} h^*_{pa_l} = c^t_{l, j}$.

This is the update rule if children nodes are are discrete/categorical variables. We use the same update for visible nodes, which we treat as vectors of binary variables. The properties remain the same in this case: $m_{l,j,k}$ ends up representing an average over previously observed values of input value $k$ in image patch $l$.

Next, is the question of how to set the hidden node values during training? The weight updates above assume $h^*_l$ values are \textit{observed}. However, when $h^*_l$ is a hidden node its values are, by definition, not observed. One option for setting hidden node values is to use MAP learning. MAP learning works by first performing MAP inference over hidden variables to find the specific values of the hidden variables, $h^*$, that maximize the posterior $P(h^* | x, \theta) \propto P(h^*, x, \theta)$. Parameters are then updated to further increase the probability of the joint $P(h^*, x, \theta)$. We use a variant of MAP learning here. In particular, hidden node values are determined by an inference procedure where each node maximizes the probability of its children nodes: each node receives the FF signal from its children, then it is either assigned a one-hot indexing the existing neuron with the maximum internal state value, or if all existing neuron values are below the threshold $\epsilon$, the node is set to a new value (i.e., a new neuron is grown) (see algorithm \ref{alg:learn}). 

The resulting algorithm as a kind of hybrid between MAP learning and learning in the case where all node values are observed. Like MAP learning we set the hidden variables to a point estimate value with the highest probability. (This is opposed to computing a distribution over hidden nodes values, which is what the common expectation maximization \cite{dempster1977maximum, heckerman1998tutorial}) algorithm does.) However, unlike MAP learning, we do not compute the maximum of the posterior since we ignore FB/top-down signal in inference. Instead, similar to \cite{george2017generative}, our algorithm computes the values that maximize the conditional probability of its child node values, while ignoring parent node values. Nodes, therefore, maximize the likelihood of their child node values. Thus, like the case where all node values are observed, each parent node does not influence/determine the value of its child node, but instead only 'observes' its child's values and updates its conditional distribution accordingly. But unlike this fully observed case, hidden node values are set via inference.

We find this max-likelihood variant works better than MAP in the online setting. MAP inference ends up pushing hidden node values to the values that were inferred in previous iterations, making the model get 'stuck' reproducing the same hidden variable values rather than learning new combinations of values for different inputs. Removing the FB/top-down influence during learning allows nodes to assign new sets of values to new inputs, as desired.
\begin{algorithm}[H]
\SetAlgoLined
\DontPrintSemicolon
\Begin{
    \For{$t=1$ \KwTo $T$}{
        \tcp{Clamp visible nodes to $x^t$}
        \tcp{Max Likelihood Inference}
        \For{$l=0$ \KwTo $L$}{
            \tcp{Compute bottom up input $h_l$, equation \ref{eq:suppInptHid}, \ref{eq:suppInptBot}}
            \tcp{If $h_l$ is less than $\epsilon$, set $h^*_l$ to new value.\\Else $h^*_l = argmax(h_l)$}}
        \tcp{Update $\epsilon$, equation \ref{eq:suppDir}}
        \tcp{Update Weights, equation \ref{eq:MUpdate}}
    }
}
\caption{SQHN Learning Algorithm \label{alg:learn}}
\end{algorithm}

\subsection{Derivation of Episodic Recognition Rule}\label{supp:recog}
Recognition tasks begin by presenting a sequence of items/data points from the training set $X_{train}$ during a training phase. Then during a testing phase, the model is presented with a mixture of old data points from $X_{train}$ and new data points from a similar (in-distribution) data set $X_{in-dist.}$ and from a dissimilar (out of distribution) data set $X_{out-dist.}$. An equal portion of data is presented from each data set during testing. The model must correctly judge if the data point is in the training set (old) or not (new). This task is based on classic and common tests of human memory \cite{rugg2003human}.

Importantly, episodic recognition is distinct from the common OOD detection task in machine learning and will therefore require a different solution. OOD detection is the task of detecting data drawn from a distribution distinct from the training set \cite{yang2021generalized}. Solutions typically involve learning a generative model of the training data, then computing the likelihood of data under this model \cite{yang2021generalized}. This is not the same as detecting data that was not present during training, since many unobserved/new data may still be from sampled from the same distribution as the training set, yet still be unobserved/new. 

Inspired by probabilistic models of human episodic recognition (e.g. \cite{shiffrin1997model}), we instead use the following approach. We assume each training input $x^t$ is mapped to, what we call, a global feature representation $h^*$. By global, we mean it represents features of the entire input (e.g., entire image) rather than just a sub-portion of it (e.g. image patch). Then each feature vector is stored in an itemized memory via a mixture model. Let $M$ be the matrix that contains cluster means. Feature representations, $h^{*,t}$ at each training iteration $t$, get stored in the columns of $M$, where $M = [h^{*,0}, h^{*,1},...]$. If every data point from the train set gets stored separately then during testing the model can simply compare a feature representation $h^*$ of the input to the stored feature representations. If the likelihood is around 1, then the model judges 'old' if less then one, it judges new.

Up until it reaches capacity, and with a high $\alpha$, this is exactly what happens at the memory node of SQHN. The memory node has full receptive field and takes as input the activities from the values of its children, which represent the input's visual features. Thus, until capacity is reached, SQHN has a clear way of making old/new judgments: perform max-likelihood inference (see algorithm \ref{alg:learn}), and check if children of the memory node have probability/energy $\approx 1$. 

However, after capacity is reached, every data point is not stored in $M_L$ in its original form. Instead, new inputs will be averaged with old ones. This raises the question of how judgments should be make after capacity is reached. We use the following strategy. Let $p(h^*_{c_L} | h^*_L=j)$ be the likelihood of the values of the children of the memory node given the memory node value $h^*_L=j$. Here for simplicity we assume this is a good estimate of the likelihood $p(x^t | h^*_L=j)$. Let's say $n$ data points from the train set have been assigned to $j$. Lets say we know the data point assigned to this value with the lowest likelihood is $x_{min}$. If we know this likelihood, we know any new data point assigned to $j$ that has a likelihood lower than this minimum must be new (i.e., have 0 probability of being old). Thus, we propose approximating the probability that some data point $x^t$ is old using the re-scaled likelihood 
\begin{equation}
p(x^t=old | h^*_L=j) \approx p(x^t | h^*_L=j) * (1 - p(x_{min} | h^*_L=j)) + p(x_{min} | h^*_L=j) 
\end{equation}
Then to make a recognition judgments, the model checks if $p(x^t=old | h^*_L=j)$ is greater than or less than .5. Equivalently, we could set a threshold, $\mu$, equal to the likelihood value at which $p(x^t=old | h^*_L=j) = .5$. In practice, we find the minimum and midpoint likelihood difficult to exactly compute online. Instead, we use a simple estimate of the mid-point value, which we find works well in practice. This estimate just keeps a moving average of the likelihoods:
\begin{equation}
\mu_{l,j}^t =  \frac{1}{c^t_{l,j}} \sum_{n=1}^t h^{*,n}_{l,j} = \frac{c^t_{l,j}-1}{c^t_{l,j}}\mu_{l,j}^{t-1} + \frac{1}{c^t_{l,j}} h_{l,j}^{*,t},
\end{equation}
where the equation on the right is how to compute this average online.

\begin{algorithm}[H]
\SetAlgoLined
\DontPrintSemicolon
\Begin{
    \For{$t=1$ \KwTo $T$}{
        \tcp{Clamp visible nodes to $x^t$}
        \tcp{Max Likelihood Inference (w/o Neuron Growth)}
        \For{$l=0$ \KwTo $L$}{
            \tcp{Compute bottom up input $h_l$, equation \ref{eq:suppInptBot}, \ref{eq:suppInptHid}}
            $h^*_l = argmax(h_l)$}
        \tcp{If $\max(h_L) > \mu_j$ (where $\argmax(h_L) = j$), judge old}
        \tcp{Else judge new}
    }
}
\caption{SQHN Episodic Recognition Algorithm \label{alg:recogn}}
\end{algorithm}

\subsection{Theoretical Results}\label{supp:theorResults}
\textbf{Capacity} We analyze the properties of the recall accuracy of an SQHN with one hidden layer. We define the capacity of the network as the maximum number of data points the network is able to recall given some recall threshold $\gamma$. Let $J$ be the number of neurons at the hidden layer.
\begin{theorem}\label{thrm:capacity}
Assume no two data points in the training data set are scalar multiples of each other and during recall data points are not corrupted. The capacity of a single hidden layer SQHN network is at least J, i.e., the number of neurons at the hidden layer, for any $\gamma \geq 0$.
\end{theorem}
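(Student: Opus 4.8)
The plan is to reduce the capacity claim to two facts about the one-hidden-layer dynamics: during training the $J$ pairwise non-proportional data points $x^1,\dots,x^J$ get stored in $J$ distinct hidden neurons, with each neuron's weight column set \emph{exactly} equal to its data point; and during recall an uncorrupted query equal to a stored $x^j$ drives the feed-forward pass to select precisely the neuron holding $x^j$, so the top-down reconstruction returns $x^j$ exactly. The recall error is then $0$, hence at most $\gamma$ for every $\gamma\ge 0$, and all $J$ points are recalled, giving capacity at least $J$.

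For the training step I would induct on the iteration $t$. All weights start at $0$, so at the start of iteration $t$ every not-yet-grown column is the zero vector while every already-grown column equals some earlier $x^i$ (induction hypothesis). By Eq.~\eqref{eq:inpLayerBot} the bottom-up activation of a neuron holding $x^i$ on input $x^t$ is $\tfrac12\,\mathrm{cossim}(x^i-\tfrac12\mathbf 1,\,x^t-\tfrac12\mathbf 1)+\tfrac12$, which is strictly below $1$ by the non-proportionality hypothesis (read for the mean-shifted vectors) and the equality case of Cauchy--Schwarz. Taking $\alpha$ large, the growth threshold $\epsilon=\alpha/(t+\alpha)\to 1$ stays above this worst-case cross-similarity, so no existing neuron clears it and a fresh neuron is grown for $x^t$. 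That neuron's count is fresh, so its step size is $1$, and its starting column is $0$; hence the update $\Delta M_{pa_l,l}=\tfrac{1}{c^*_{pa_l}+1}(h^*_l-M_{pa_l,l}h^*_{pa_l})h^{*\top}_{pa_l}$, using $M_{pa_l,l}h^*_{pa_l}=0$, writes $x^t$ verbatim into that column. After $J$ iterations the columns of $M$ are exactly $\{x^1,\dots,x^J\}$, one per neuron.

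For recall, feed an uncorrupted $x^j$. Assuming $x^j\ne\tfrac12\mathbf 1$ so the normalization in Eq.~\eqref{eq:inpLayerBot} is well defined, the neuron storing $x^j$ receives $\tfrac12\cdot 1+\tfrac12=1$, while every other neuron receives $\tfrac12\,\mathrm{cossim}(x^i-\tfrac12\mathbf 1,\,x^j-\tfrac12\mathbf 1)+\tfrac12<1$. So $\mathrm{argmax}$ uniquely selects the neuron holding $x^j$, $h^*_L=e_k$, and the top-down reconstruction of the visible patch is $M h^*_L=m_k=x^j$, an exact recovery. Since $j$ was arbitrary this covers all $J$ points, which proves the stated bound.

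The main obstacle is the growth step: I must rule out two distinct inputs collapsing onto the same neuron, which is exactly the requirement that $\epsilon$ dominate $\max_{i\ne j}\mathrm{cossim}(x^i-\tfrac12\mathbf 1,\,x^j-\tfrac12\mathbf 1)$ over the first $J$ iterations. In the large-$\alpha$ regime --- the one in which a hard capacity of $J$ is natural, and the regime used in the experiments --- this is automatic because $\epsilon\to 1$, but making it precise for general $\alpha$ would take more care. Two minor caveats: the degenerate case $x^j=\tfrac12\mathbf 1$ must be excluded since it makes the normalization in Eq.~\eqref{eq:inpLayerBot} undefined; and the hypothesis ``no two data points are scalar multiples'' should be understood for the mean-shifted vectors $x-\tfrac12\mathbf 1$, since that is what the feed-forward similarity actually compares.
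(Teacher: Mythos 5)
Your proposal is correct, and its recall half is exactly the paper's argument: with each of the $J$ points stored verbatim in its own column, the mean-shifted cosine similarity of Eq.~\eqref{eq:inpLayerBot} attains its maximum value only at the matching column (Cauchy--Schwarz with equality only for proportional shifted vectors), so the argmax selects the right neuron, the top-down pass returns the stored vector exactly, the MSE is $0$, and recall succeeds for every $\gamma\ge 0$. Where you differ is that you also prove the storage step: an induction over training showing that, in the large-$\alpha$ regime, the growth threshold $\epsilon=\alpha/(t+\alpha)$ dominates every cross-similarity, a fresh neuron (count $0$, zero column, step size $1$) is grown for each new input, and the update writes $x^t$ verbatim into that column. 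The paper does not attempt this; it simply \emph{assumes} one data point per memory vector, which is legitimate under its definition of capacity as the maximum number of points the network \emph{is able} to recall --- exhibiting one reachable storage configuration in which all $J$ points are recalled suffices, and the methods section confirms such storage is attained by taking $\alpha$ very large. So your version buys a stronger statement (the training dynamics actually reach that configuration) at the cost of an explicit large-$\alpha$ hypothesis that the theorem statement does not carry, while the paper's reading avoids reasoning about the growth threshold altogether. Your two caveats --- that ``no two data points are scalar multiples'' must be read for the mean-shifted vectors $x-\tfrac12\mathbf 1$, and that the degenerate input $x=\tfrac12\mathbf 1$ breaks the normalization --- are genuine gaps that the paper's own proof glosses over, and are worth keeping.
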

\begin{proof}
Assume a single data point is stored in each memory vector. If the child node is a discrete variable then the input to each hidden neuron, $j$ is the cosine similarity between the input and the jth memory vector. If the child node is a binary variable then the input to each hidden neuron, $j$ is the mean-shifted cosine similarity (equation \ref{eq:inpLayerBot}) between the input and the jth memory vector. Both will return the maximum possible value for two identical vectors, as long as no two vectors have the same angle. We assume no to vectors have the same angle (no two stored vectors are scalar multiples of each other). Therefore, the maximum value at the hidden layer is guaranteed to index the correct stored training data point, which will yield perfectly accurate MSE of 0 and thus perfect recall accuracy for any $\gamma$. If some memory vectors average over data points, it is not guaranteed (though possible depending on how small $\gamma$ and how similar averaged vectors) returned memory vectors yield correct recall for obvious reason. Thus, the minimum capacity is at least the number of neurons at the hidden layer, J.
\end{proof}

\textbf{Forgetting} It is also important to understand what happens when the SQHN network is pushed passed capacity. Ideally, memory models should show 'graceful' forgetting in such cases, where performance decreases in a predictable and non-abrupt manner. We assume a kind of  worst-case conditions where 1) data points are highly orthogonal such that averaging two or more data points in some memory vector $m_j$ means that returning the memory vector $m_j$ during recall does meet the condition for successful recall for any of the data points it was averaged over, and 2) each existing memory vector is equally likely to be accessed and updated during training.
\begin{theorem}\label{thrm:forget}
Under the worst case assumptions, the number of memory vectors that store a single data point decreases at an exponential rate of $J e^{\frac{-t}{J}}$, where $t$ is the training iteration (starting after the network is at capacity) and $J$ is the number of hidden units. This entails its recall accuracy will decay at rate $\frac{J e^{\frac{-t}{J}}}{J+t}$.
\end{theorem}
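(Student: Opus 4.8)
The plan is to track the expected number of \emph{pure} memory vectors---those that still store exactly one data point---as a function of the number $t$ of post-capacity iterations, and then convert this count into a recall-accuracy estimate. By Theorem~\ref{thrm:capacity}, up to capacity the network stores one distinct data point per hidden neuron, so at $t=0$ all $J$ memory vectors are pure. Let $N_t$ denote the (random) number of pure vectors after $t$ further iterations, so $N_0 = J$.

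First I would set up the recursion for $N_t$. Once the node is at capacity no new neuron is grown, so the incoming data point is merged into one of the $J$ existing memory vectors via the averaging update~(\ref{eq:MUpdate}). Under worst-case assumption (2) each of the $J$ vectors is equally likely to be the one updated, so with probability $N_t/J$ a currently pure vector is hit and becomes an average of two or more highly orthogonal (assumption (1)) data points, decrementing $N$ by one; with probability $1 - N_t/J$ an already-contaminated vector is hit and $N$ is unchanged. Contamination is irreversible, so $N_t$ is non-increasing and $\mathbb{E}[N_{t+1}\mid N_t] = N_t(1 - 1/J)$. Iterating, $\mathbb{E}[N_t] = J(1-1/J)^t$, and since $(1-1/J)^t = e^{t\ln(1-1/J)} \approx e^{-t/J}$ for moderately large $J$, the expected number of single-data-point vectors decays like $Je^{-t/J}$.

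Second I would translate this into recall accuracy. Under assumption (1), recall from a contaminated vector fails the threshold $\gamma$ for every data point averaged into it, while recall from a pure vector returns its unique stored data point exactly---this is the mean-shifted-cosine-similarity argument of Theorem~\ref{thrm:capacity}, giving MSE $0 < \gamma$. A newly arriving data point can never be the sole occupant of a vector, since it is always merged into an already-occupied one, so the pure vectors hold exactly $N_t$ of the $J+t$ data points seen so far, and these are precisely the recoverable ones. By the definition of $\mathcal{A}^t$ in~(\ref{eqn:Lrecall}), the expected recall accuracy after $t$ post-capacity steps is therefore $\mathbb{E}[N_t]/(J+t) = J(1-1/J)^t/(J+t) \approx Je^{-t/J}/(J+t)$, which is the claimed decay rate.

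The only delicate point is what is meant by ``rate'': $N_t$ is random, and the clean statement is about $\mathbb{E}[N_t]$, so I would phrase the result in expectation (a Chernoff-type concentration bound would additionally control the fluctuations around $Je^{-t/J}$, but that is beyond what is claimed), and the step from the exact $(1-1/J)^t$ to $e^{-t/J}$ is an approximation accurate for large $J$. Everything else is bookkeeping: verifying that the averaging update indeed contaminates a pure vector in a single hit, that contamination is monotone, and that the denominator is $J+t$ because $J$ points are present at capacity and one more arrives per subsequent iteration.
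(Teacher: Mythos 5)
Your proposal is correct and follows essentially the same argument as the paper: both track the expected number of ``pure'' memory vectors under uniform random overwrites, observe that the expected loss per step equals the current fraction of pure vectors, and conclude exponential decay with rate $1/J$ -- you via the exact discrete recursion $\mathbb{E}[N_t]=J(1-1/J)^t\approx Je^{-t/J}$, the paper via the continuous-time approximation $\partial I/\partial t=-I(t)/J$ solved as $Je^{-t/J}$. Your write-up is in fact slightly more complete, since you make the expectation framing explicit and derive the accuracy factor $1/(J+t)$ from counting the $J+t$ data points seen so far, a step the paper states in the theorem but does not spell out in its proof.
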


\begin{proof}
The process we are characterizing is the training of a basic SQHN unit that begins at capacity (each memory vector is set equal to one previously observed data point). Each training iteration, $t$, a memory vector is chosen with uniform probability and the data point observed at that iteration is averaged with the chosen memory vector. There are thus two kinds of memory vectors: those that store a single data point and those that store an average over more than one data point. We want to characterize how the number of memory vectors that store only a single data point decrease over time on average, where by 'on average' we mean that the decreases are measured and averaged over an infinite number of training runs.

Let $I(t)$ be the number of memory vectors, at time $t$, that store only a single data point. Under these assumptions, the decrease in $I(t)$ is clearly equal to the probability of choosing and updating a memory vector that store only a single data point, since this is equal to the proportion of training runs that will choose and update such a memory vector. Under the assumption of a uniform probability distribution, the probability of choosing and updating a memory vector that equals a single data point is just equal to the proportion of memories that store a single data point at time $t$:
\begin{equation}
\frac{\partial I(t)}{\partial t} = - \frac{I(t)}{N}.
\end{equation}\label{eq:forgetRate}

A function $f(x)$ decays exponentially according to $f(x_0) e^{-\lambda x}$ with rate $\lambda$ and initial value $f(x_0)$, if $\frac{\partial f(x)}{\partial x} = - \lambda f(x)$. Clearly, the rate of forgetting shown above is an exponential with rate $\frac{1}{N}$. At the initial iteration it is assumed $I(0) = N$. Therefore, the forget rate can be described as $N e^{\frac{-t}{N}}$.
\end{proof}

Thus, the SQHN memory unit or one hidden layer model shows a graceful, exponentially decaying forget rate.

\textbf{Learning and Parameter Isolation} Let's consider a distance metric between the parameters before and after the update in terms of the amount of energy that is left for parameters to reduce: 
\begin{equation}
d(M_l^{new}, M_l^{old}) = E_{ch_l}(M^{new}_l, h_l^*, h_{ch_l}^*) - E_{ch_l}(M^{old}_l, h_l^*, h_{ch_l}^*),
\end{equation}
where $E_{ch_l}$  is the energy (averaged conditional probability) of the children of $l$, and $M_l^{old}$ is the matrix of conditional probabilities from $l$ to its children before the weight update, and $M_l^{new}$ the conditional probabilities after. Thus, this distance measure tells us how different the conditional probabilities of the children are under the new and old matrix.

Here we analyze properties of the SQHN learning algorithm in a case where two simplifying assumptions are made: 
\begin{enumerate}
\item Each column of $M_l$ is updated with an equal, constant step size.

\item The weight are updated to maximize energy to a specific value: $\kappa = E_{ch_l}(M^{new}_l, h_l^*, h_{ch_l}^*)$. 
\end{enumerate}
From this the following theorem clearly follows.

\begin{theorem}
Under the assumptions above, the value of $h_l^*$ computed during the training phase of the SQHN is equivalent to 
\begin{equation}
h^*_l = \argmin_{h_l^*} d(M_l^{new}, M_l^{old}).
\end{equation}
\end{theorem}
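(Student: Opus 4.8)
The plan is to reduce the minimization of $d(M_l^{new}, M_l^{old})$ over one-hot vectors $h_l^*$ to the maximization of a quantity that the SQHN's training-phase inference already computes, namely the (weighted) energy of the children of $l$ evaluated with the \emph{old} matrix.

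First I would unfold the definition $d(M_l^{new}, M_l^{old}) = E_{ch_l}(M^{new}_l, h_l^*, h_{ch_l}^*) - E_{ch_l}(M^{old}_l, h_l^*, h_{ch_l}^*)$ and invoke the two standing assumptions. Assumption 1 (every column of $M_l$ moves with the same constant step size) together with Assumption 2 (the update drives the children's energy to the fixed target $\kappa$) guarantees that the first term equals the constant $\kappa$, independently of which one-hot $h_l^*$ is selected. Hence $d(M_l^{new}, M_l^{old}) = \kappa - E_{ch_l}(M^{old}_l, h_l^*, h_{ch_l}^*)$, and since $\kappa$ does not depend on $h_l^*$,
\[
\argmin_{h_l^*} d(M_l^{new}, M_l^{old}) = \argmax_{h_l^*} E_{ch_l}(M^{old}_l, h_l^*, h_{ch_l}^*).
\]

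Second I would show that the right-hand side is exactly the $h_l^*$ produced during the SQHN learning algorithm. By the derivation of the inference procedure (supp.~\ref{supp:infer}), the feed-forward input $h_{l,j}$ to neuron $j$ of node $l$ equals $\tfrac1Z\sum_{c\in ch(l)} p(h_c^* \mid h_l = j)\,\max(h_c)$, i.e.\ the normalized weighted energy of $l$'s children when $l=j$, computed with the current (old) matrix; crucially the normalizer $Z$ and the weights $\max(h_c)$ depend only on the children's activities, not on $j$, so $\arg\max_j h_{l,j} = \arg\max_j E_{ch_l}(M^{old}_l, e_j, h_{ch_l}^*)$. Since the learning algorithm (algorithm~\ref{alg:learn}) uses max-likelihood inference that drops the top-down term and, in the regime where no new neuron is grown, simply sets $h_l^* = \mathrm{argmax}(h_l)$, the computed $h_l^*$ is the one-hot indexing that argmax. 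Combining with the first step yields $h_l^* = \argmin_{h_l^*} d(M_l^{new}, M_l^{old})$, as claimed.

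The only delicate point — and the reason the two simplifying assumptions are imposed — is step one: without them the post-update energy $E_{ch_l}(M^{new}_l,\cdot)$ depends on $h_l^*$ both through the count-dependent step size $1/(c_{l,j}^*+1)$ and through the old column that is averaged in, so the first term would not cancel cleanly and the equivalence would only be approximate. I would therefore state explicitly that Assumptions 1--2 are precisely what makes that term a $h_l^*$-independent constant (and that the statement is understood in the branch where the argmax is taken rather than a new neuron grown); everything after that is the bookkeeping above, which is why the result ``clearly follows.''
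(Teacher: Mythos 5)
Your proposal is correct and follows essentially the same route as the paper's own proof: Assumption 2 makes the post-update children's energy the constant $\kappa$ independent of $h_l^*$, so minimizing $d(M_l^{new}, M_l^{old}) = \kappa - E_{ch_l}(M^{old}_l, h_l^*, h_{ch_l}^*)$ is the same as maximizing the children's energy under the old matrix, which is exactly what the training-phase (max-likelihood, no top-down) inference selects. Your added remark identifying the feed-forward quantity $h_{l,j}$ with that energy is the same step the paper asserts directly ("each node takes the value that maximizes the energy of its children"), so the two arguments coincide.
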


\begin{proof}
During learning, each node $l$ takes value $h^*_l$ that maximizes the energy of its children. Thus, during learning $h^*_l$ is the value that maximizes $E_{ch_l}(M^{old}_l, h_l^*, h_{ch_l}^*)$. From assumption 2, we know after the update for any value of $h^*_l$, the energy of the children is $\kappa$. Thus, it clearly follows that the value $h^*_l$ can be described as the value minimizing 
\begin{equation}
\begin{split}
\kappa - E_{ch_l}(M^{old}_l, h_l^*, h_{ch_l}^*) &= E_{ch_l}(M^{new}_l, h_l^*, h_{ch_l}^*) - E_{ch_l}(M^{old}_l, h_l^*, h_{ch_l}^*)\\ &= d(M_l^{new}, M_l^{old}).
\end{split}
\end{equation}
\end{proof}
In this simplified case, setting $h^*_l$ to the value that maximizes the energy of the children of $l$, is equivalent to setting $h^*_l$  equal to the value that minimizes \textit{the amount that conditional probabilities encoded in $M_l$ change during the weight update}. By first maximizing the energy w.r.t. $h^*_l$, there is less 'work' for weight updates to do when they increase energy w.r.t. $M_l$. In other words, the SQHN algorithm seems to have a principled method for choosing which parameters to isolation during training, i.e., update the column in $M_l$ that requires the least amount of change given the value of children nodes. This makes sense from the point of view of prevent catastrophic forgetting, where we generally want to reduce the amount parameters change. Future work could look to understand how this result applies to less simplified cases, better describing the SQHN.

\subsection{Relations to Hopfield Network and Transformers}\label{supp:SQHNvHop} The SQHN has close relations to the continuous modern Hopfield network (MHN) \cite{ramsauer2020hopfield}. The continuous MHN network is
\begin{equation}\label{eq:cMHN}
x_{new} = M softmax(\beta M^T x),
\end{equation}
where $M$ is a matrix of data points/memory vectors, $x$ is an input/query vectors, and $\beta$ is the temperature. The MHN takes the dot product between each memory vector and the query vector and passes the output through a softmax softmax. These softmax values are then multiplied again by the weight matrix. It was shown by \cite{ramsauer2020hopfield} that this model is identical to the attention layer of the transformer in the case where the transformer key (K) and value (V) matrices are tied/identical. One interpretation of this operation is that it is performing a kind of nearest neighbor computation (e.g., CITE), similar to memory retrieval in a dictionary, where similarity values between memory vectors (which are like templates stored in the dictionary) and query vectors are computed using the dot product. Then a weighted average of memory vectors is returned, where those memory vectors more similar to the query vector are given more weight. An SQHN network with a single hidden layer also performs a nearest neighbor operation. This simple network performs recall given query vector $x$ with the following equation
\begin{equation}\label{eq:SQHNSmall}
x_{new} = M argmax(\frac{1}{Z}M^T x),
\end{equation}
where $\frac{1}{Z}$ normalizes each elements of the hidden layer input, $M^T x$, such that the values range between zero and one and the input, x, and memory vectors, $m_j$, are normalized. Clearly, this is a kind of nearest neighbor operation similar to the MHN, except 1) the similarity is computed using a kind of cosine similarity (instead of dot product) and 2) instead of a weighted average, SQHN returns the single memory vector that is most similar to query vector. MHN may equivalently also use this hard recall procedure in the case where $\beta \rightarrow \infty$. However, special kinds of normalization would need to be added to the standard MHN and transformer attention to yield equivalence to SQHN.

\subsection{Supplementary Figures}
\begin{figure}[h]
\includegraphics[width=\textwidth]{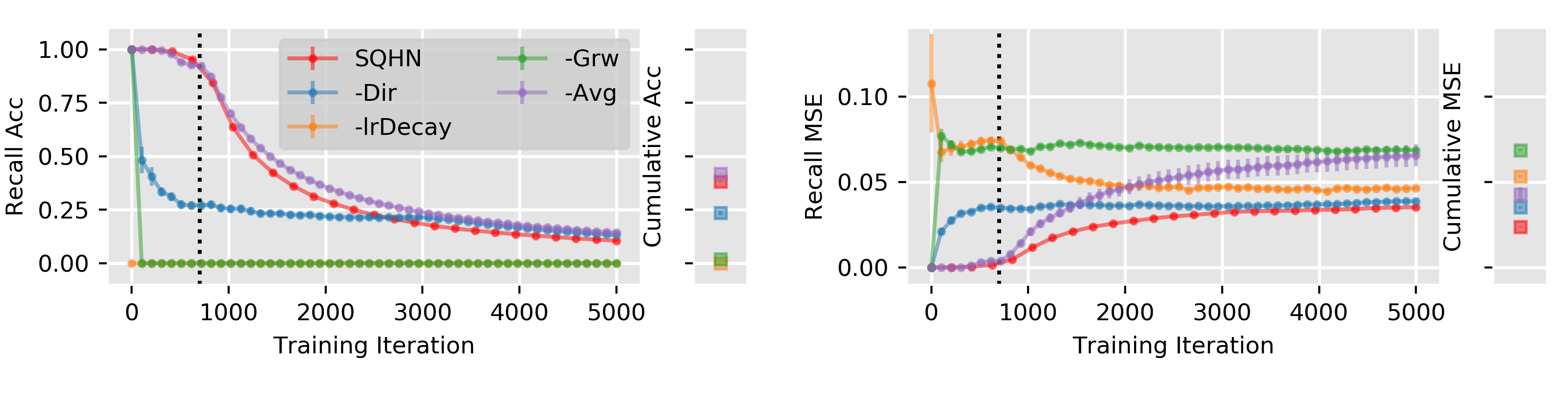}
\caption{Ablation study. We test the effects of removing various components of the SQHN learning algorithm in the online continual recall task on CIFAR-100 for the basic SQHN unit (figure \ref{fig:onlineContAbl}. We test the following ablations: 1) remove the Dirichlet based, exponentially decaying grow threshold and replace with a constant threshold (-Dir). 2) Remove the learning rate decay schedule and replace with a constant learning rate (-lrDecay). 3) Remove the grow operation and instead randomly initialize weights (-Grw). 4) Remove the averaging update and only grow new memory vectors (-Avg). The decaying grow threshold, learning rate schedule, and the grow operation are essential for high performance before the capacity (vertical dotted line) is reached. The grow operation, learning rate decay, and averaging operation are all essential for high performance after capacity is reached. Note that removing averaging helped slightly with the number of recalled images, however, doing so significantly worsened recall MSE. All components are necessary for a high cumulative performance under both measures.Recall accuracy and cumulative accuracy for various ablations is shown on the left. Recall MSE and cumulative MSE for the same ablations shown on right.}
\centering
\label{fig:onlineContAbl}
\end{figure}

\begin{figure}[h]
\includegraphics[width=.99\textwidth]{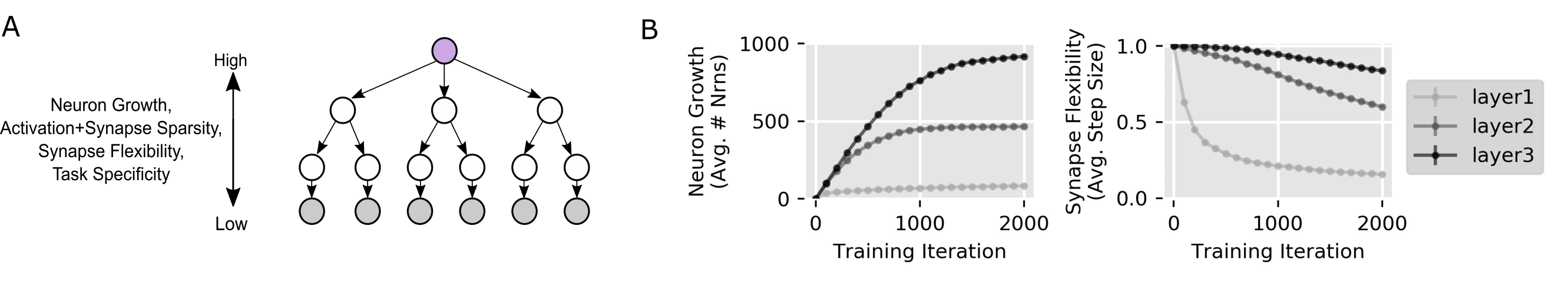}
\caption{Emergent properties. \textbf{A} Diagram summary of the emergent properties. \textbf{B} Measurements of neuron growth (left) and synapse flexibility (right) at each layer during training in online scenario in an SQHN with three hidden layers. Each node in the network has the same maximum number of neurons (1000), the same grow threshold, the same input kernal size (4x4), same learning rate decay, etc. The only difference between each node is its location in the network.}
\centering
\label{fig:emerge}
\end{figure}

\begin{figure}[h]
\includegraphics[width=.95\textwidth]{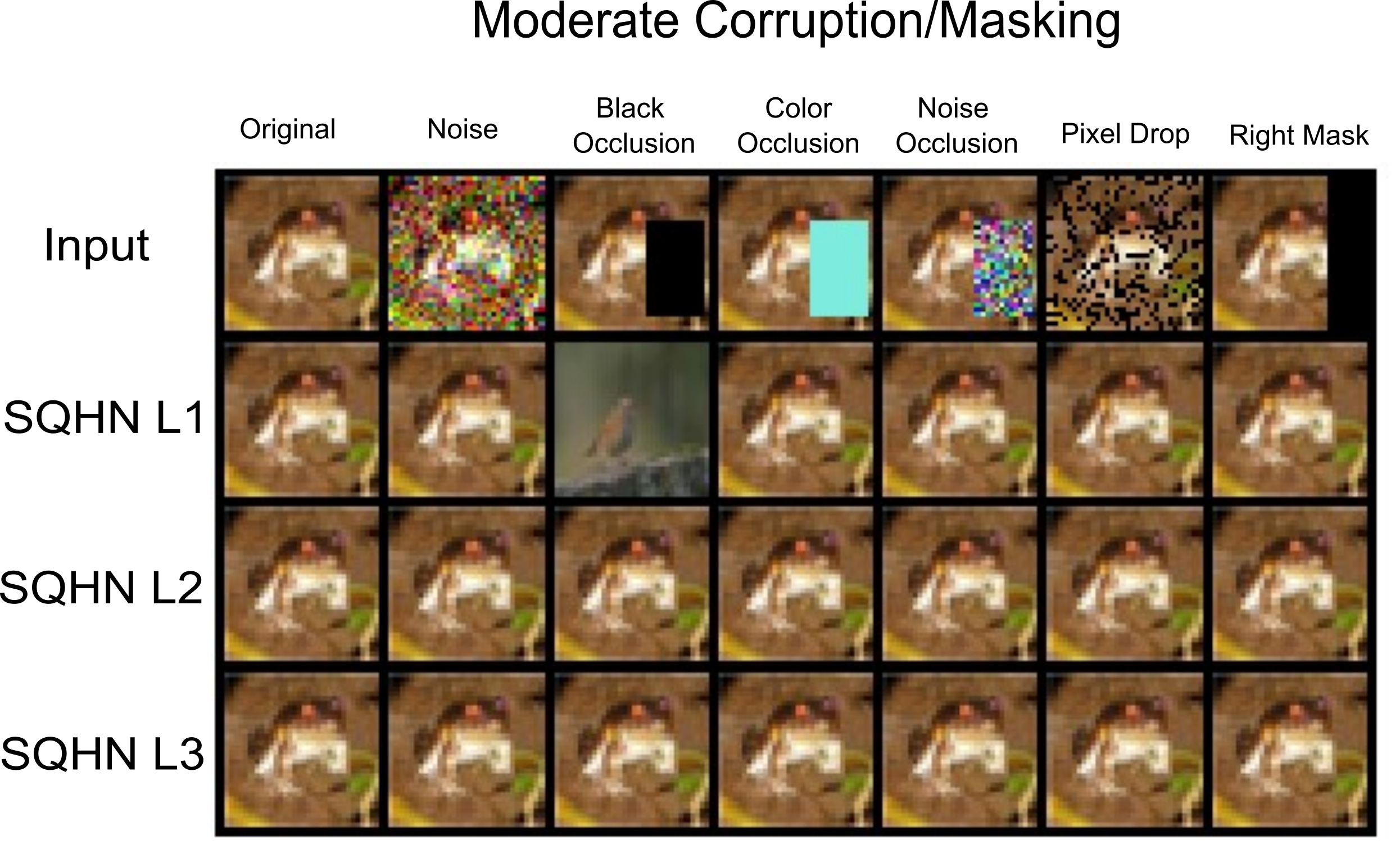}
\centering
\caption{Example outputs for moderate corruption scenario. Noise variance .2, masking and occlusion covers .25 of image. Trees are better at occlusion, but otherwise, with moderate corruption/masking all networks perform very well on all tasks.}
\label{fig:modCorruptEx}
\end{figure}

\begin{figure}[h]
\includegraphics[width=.95\textwidth]{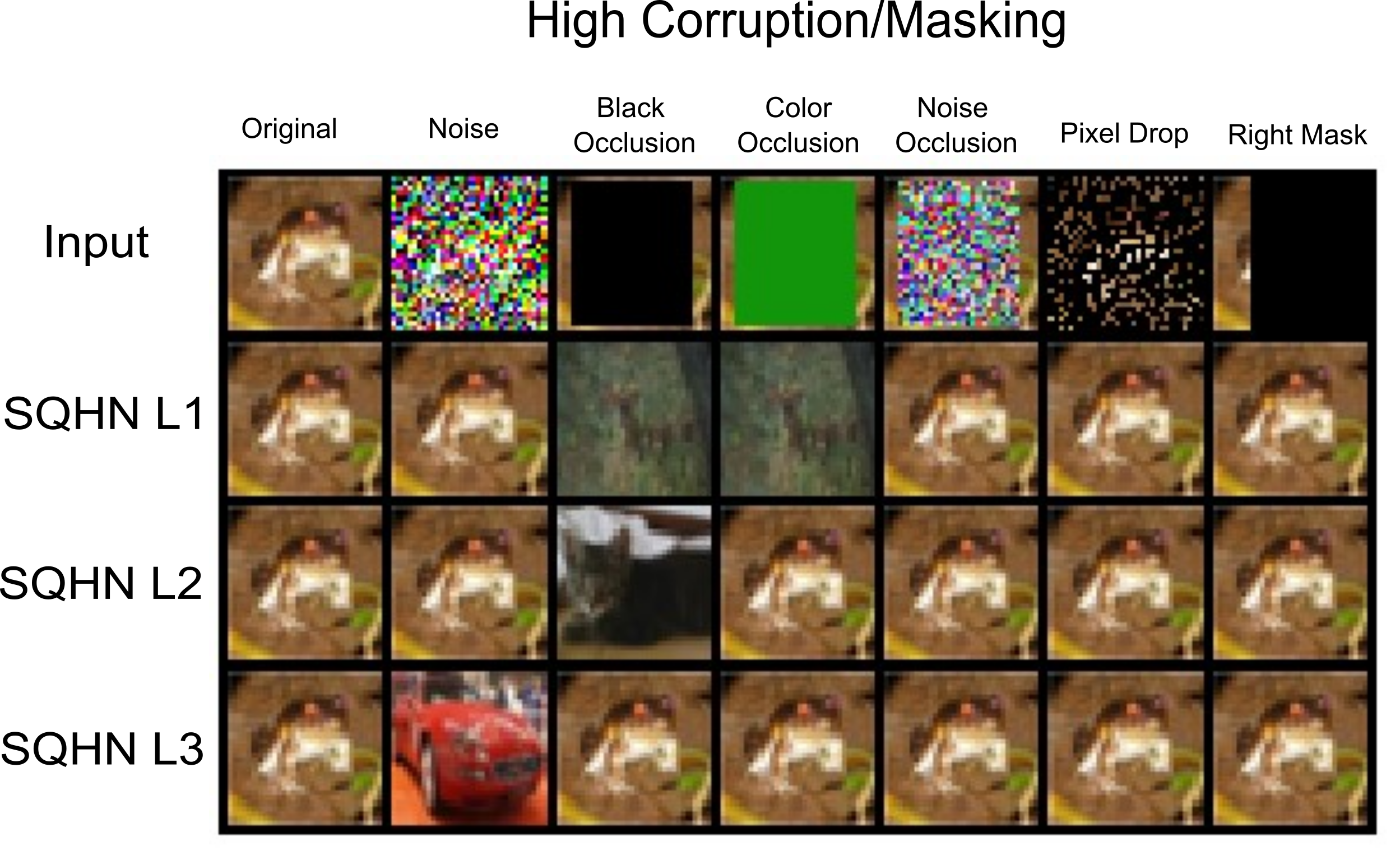}
\centering
\caption{Example outputs for high corruption scenario. Noise variance .8, masking and occlusion covers .75 of image. Taller trees, especially L3, are better at occlusion, but because they have small receptive fields at bottom layers they are more sensitive to noise. Smaller receptive fields mean smaller input vector per node, and smaller vector are more likely to overlap due to noise. The part-whole representation of the image, however, allows trees with multiple levels to better ignore occluded regions of the image.}
\label{fig:highCorruptEx}
\end{figure}


\end{document}